\newtheorem{theorem}{Theorem}[section]
\numberwithin{equation}{section}
\newtheorem{lemma}[theorem]{Lemma}
\newtheorem{remark}[theorem]{Remark}
\newenvironment{proof}[1][Proof]{\textbf{#1. }}{\ \rule{0.5em}{0.5em}}%
\journal{}
\begin{document}

\begin{frontmatter}



\title{Greedy metrics in orthogonal greedy learning \tnoteref{t1}} \tnotetext[t1]{The research was
supported by the National 973 Programming (2013CB329404), the Key
Program of National Natural Science Foundation of China (Grant No.
11131006) and the  National Natural Science Foundation of China
(Grant No. 11401462)}

\author{Lin Xu$^1$  }

\author{Shaobo Lin$^{2}$\corref{*}}\cortext[*]{Corresponding author: sblin1983@gmail.com}

\author{Jinshan Zeng$^1$}

\author{Zongben Xu$^1$}

\address{1. Institute for Information and
System Sciences, School of Mathematics and Statistics, Xi'an
Jiaotong University, Xi'an, 710049, China

2. College of Mathematics and Information Science, Wenzhou
University, Wenzhou 325035, China }

\begin{abstract}

Orthogonal greedy learning (OGL) is a stepwise learning scheme that
adds a new atom from a dictionary via the steepest gradient descent
and build the estimator via orthogonal projecting the target
function to the space spanned by the selected atoms in each greedy
step. Here, ``greed'' means choosing a new atom according to the
steepest gradient descent principle. OGL then avoids the
overfitting/underfitting by selecting an appropriate iteration
number. In this paper, we point out that the
overfitting/underfitting can also be avoided via redefining
``greed'' in OGL. To this end, we introduce a new greedy metric,
called $\delta$-greedy thresholds, to refine ``greed'' and
  theoretically verifies its feasibility. Furthermore, we
 reveals that such a
greedy metric can bring an adaptive termination rule on the premise
of maintaining the prominent learning performance of OGL.   Our
results show that the steepest gradient descent is not the unique
greedy metric of OGL and some other more suitable metric may lessen
the hassle of model-selection of OGL.

\end{abstract}

\begin{keyword}
 Supervised learning,  orthogonal greedy learning, greedy metric,
 thresholding,
 generalization capability.

\end{keyword}
\end{frontmatter}
\section{Introduction}

Supervised learning focuses on  synthesizing a function (or mapping)
to approximate (or represent) an
underlying
 relationship
between the input and corresponding output based on finitely many
input-output samples.  A system tackling  supervised learning
problems is commonly called as a learning system (or learning
machine). A standard learning system usually comprises a hypothesis
space, an optimization strategy, and a learning algorithm;
Specifically, the hypothesis space is a family of parameterized
functions that encodes the prior knowledge of the data, and the
optimization strategy is an optimization problem which defines the
estimator by utilizing the given samples, and the learning algorithm
is an inference procedure that numerically solves the optimization
problem.

Dictionary learning  is a family of learning systems   whose
hypothesis spaces are  linear combinations  of atoms (or elements)
of some given dictionaries. Here,  the dictionary denotes a family
of base learners \cite{Temlaykov2008}. For such type hypothesis
spaces,  regularization schemes such as the bridge estimator
\cite{Armagan2009}, ridge estimator \cite{Golub1979} and Lasso
estimator \cite{Tibshirani1995} are often employed as the
optimization strategies. When the scale of samples is not too large,
these optimization strategies can be realized by various learning
algorithms such as the regularized least square algorithms
\cite{Wu2006}, iterative thresholding algorithms
\cite{Daubechies2004} and iterative reweighted algorithms
\cite{Daubechies2010}. However, a large portion of the
aforementioned learning algorithms are time-consuming and therefore
may cause the sluggishness of the corresponding learning systems
\cite{Zhang2014}, particularly, when
applied to the large-scale data sets.

Greedy learning  or, more specifically,  learning through greedy
search or applying greedy-type algorithms, provides a possibility to
circumvent the  drawbacks  of regularization methods
\cite{Barron2008}.  Greedy-type algorithms are stepwise inference
processes that start  from a null model and follow  the problem
solving heuristic of making the locally optimal choice at each step
with the hope of finding a global optimum. If the number of steps is
moderate, then greedy-type algorithms possess charming computational
advantage, when compared with the regularization schemes
\cite{Temlaykov2008}.
 This
property triggers avid research activities of greedy-type algorithms
in signal processing \cite{Dai2009,Kunis2008,Tropp2004}, inverse
problem \cite{Donoho2012,Tropp2010}, sparse approximation
\cite{Donoho2007,Temlaykov2011} and,
particularly,
machine learning
\cite{Barron2008,Chen2013a,Lin2013a}.

\subsection{Elements of greedy learning}

Four most important  elements  of greedy learning are the
``dictionary-selection'', ``greedy-metric'', ``iterative-strategy''
and ``stopping-criterion''. This is essentially different from the
greedy approximation that usually only focuses on the
``dictionary-selection'' and ``iterative-format'' issues
\cite{Temlaykov2008}, as the greedy learning concerns not only the
approximation capability, but also the  cost, such as the model
complexity,  that should pay to achieve a specified approximation
accuracy. Therefore, greedy learning can be regarded as a four-issue
learning scheme.

$\bullet$  ``Dictionary-selection'' issue: this issue devotes to
selecting a suitable dictionary for a given learning task. As a
classical
topic of greedy approximation, there are a
great deal of  dictionaries available to greedy learning. Typical
examples include   the greedy basis \cite{Temlaykov2008},
quasi-greedy basis \cite{Temlaykov2003}, redundant dictionary
\cite{Devore1996}, orthogonal basis \cite{Temlyakov1998},
kernel-based sample dependent dictionary \cite{Chen2013,Lin2013a}
and stump dictionary  \cite{Friedman2001}.

$\bullet$ ``Greedy-metric'' issue: this issue regulates the
criterion to choose a new atom (or element) from the dictionary  in
each greedy step.
Besides the widely used steepest gradient
descent (SGD) method
\cite{Devore1996}, there are also many existing methods such as weak
greed \cite{Temlaykov2000}, thresholding greed \cite{Temlaykov2008}
and super greed \cite{Liu2012} to quantify the greedy-metric for the
approximation purpose.  However, to the best of our knowledge, only
the SGD metric is employed in greedy learning, as all the results in
\cite{Liu2012,Temlaykov2000,Temlaykov2008} imply that this metric is
superior to other metrics  in greedy approximation.

$\bullet$ ``Iterative-format'' issue: this issue focuses on how to
define a new
 estimator based on the selected atoms.
Similar to the ``dictionary-selection'' issue, the
``iterative-strategy'' issue is also a classical
 topic of greedy approximation.
There are several existing types of greedy iteration schemes
\cite{Temlaykov2008}. Among these, three most commonly used
iteration schemes are the pure greedy, orthogonal greedy and relaxed
greedy formats. Each format possess its own pros and cons
\cite{Temlaykov2003,Temlaykov2008} and has been widely used in
greedy approximation and learning
\cite{Barron2008,Chen2013,Friedman2001,Lin2014,Temlaykov2008a}.
 For
instance, compared with the orthogonal greedy strategy, the   pure
and relaxed greedy strategies have benefits of computation but
suffer from either the low convergence rate or the small applicable
scope problem.

$\bullet$ ``Stopping-criterion'' issue: this issue depicts how to
terminate the learning process. The ``stopping-criterion'' is
regarded as the main distinction between greedy approximation and
learning and has been frequently studied recently
\cite{Barron2008,Chen2013,Lin2013a}. For example, Barron et al.
\cite{Barron2008} proposed an $l^0$-based complexity regularization
strategy, and Chen et al. \cite{Chen2013} provided an $l^1$-based
adaptive stopping criterion.

\subsection{Motivations of greedy metrics}

Orthogonal greedy learning (OGL) is a stepwise learning scheme that
adds a new atom from a dictionary via SGD and then generate an
estimator via orthogonally projecting the objective function to the
space spanned by the selected atoms at each greedy step. A common
consensus of orthogonal  greedy approximation is that  better
approximation results  can be achieved with larger number of
iterations \cite{Temlaykov2008}. However, this claim can not be
applicable to greedy learning since the estimator is based on the
samples with observational noises. Therefore, researches usually
adopt a suitable number of iteration in OGL
  to avoid the overfitting/underfitting \cite{Barron2008,Chen2013}.
\begin{figure}[H]
\centering
\includegraphics[height=6cm,width=7.0cm]{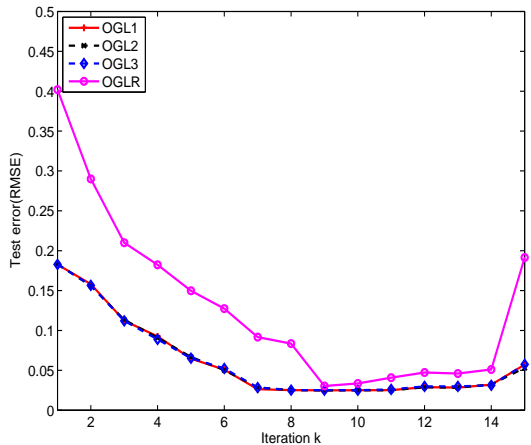}
\caption{The comparisons among four OGL with different greedy
metrics. The  levels of greed satisfies OGL1$\geq$ OGL2$\geq$ OGL3$\geq$
OGLR}
\end{figure}

Since OGL always searches the most correlative atom
  and realizes the optimal approximation capability of the space spanned by the selected
atoms in each greedy step, its generalization capability becomes
  sensitive to the number of iterations.
Thus, a slight turbulence of the number of atoms may lead to a great
change of the generalization capability, which can be witnessed in
Fig.1. Furthermore, the   $l^0$-based complexity regularization
strategy \cite{Barron2008} is only for the benefit of theoretical
analysis and the applicable range of the $l^1$-based adaptive
stopping criterion \cite{Chen2013} is quite restricted, which makes
it be difficult to persuade the programmers to utilize OGL.
Recalling that a possible reason of this problem is OGL searches the
new atom according to SGD,
 an advisable idea is to weaken the  level of
greed by taking the ``greedy-metric'' issue into account. For this
purpose, we run a simple simulation (whose experimental setting can
be found in Sec. 5.2) to judge the possibility of this idea.
The result (Fig.1) shows that
the generalization of OGL will not degrade via weakening the level of greed
if the greedy-metric is specified appropriately.

\subsection{Our contributions}

Different from other three issues of greedy learning, the ``greedy
metric'' issue, to the best of our knowledge, has been studied a few
in both theory and practice. The purpose of the present paper is to
reveal the importance and necessity of studying the
``greedy-metric'' issue in OGL.  The main contributions can be
summarized as the following.

$\bullet$ We propose a new greedy metric called the
``$\delta$-greedy thresholds'' to measure the level of greed  in
OGL. Although  this metric has already been used in greedy
approximation \cite{Temlaykov2008}, the novelty of translating  it
to OGL is that using this metric in OGL provides a possibility to
improve the generalization capability of OGL further. We prove that,
if the iteration number is appropriately specified, then OGL with
the ``$\delta$-greedy thresholds'' metric can reach the existing
almost optimal learning rate  of OGL \cite{Barron2008}.

$\bullet$ Based on the ``$\delta$-greedy thresholds'', an adaptive
termination rule is developed for OGL. Different from the classical
stopping criterion that reach the bias and variance balance via
choosing appropriate number of iterations, our study implies that
the balance can also be attained through setting a suitable greedy
metric. This phenomenon reveals the essential importance of the
``greedy-metric'' issue, which often seems to be   overlooked in
greedy learning. We also presents the theoretical justification of
such an adaptive termination rule. Our result  (Theorem 3.2) shows
that
 the greedy-metric based
termination  rule performs as good as the iteration number based
termination rule \cite{Barron2008} in the sense that the
generalization capabilities of the corresponding OGL are almost
identical.


\subsection{Organization}
The rest of paper is organized as follows. In the next section,
  we make a brief introduction of statistical learning theory
and greedy learning. In Section 3, we introduce the
``$\delta$-greedy thresholds'' metric in OGL and provide its
feasibility justification. In Section 4, based on the
``$\delta$-greedy thresholds'' metric, we propose an adaptive
termination rule and the corresponding $\delta$-TOGL system. The
theoretical feasibility of the $\delta$-TOGL system is also given in
this section. In Section 5, we present numerical simulation
experiments to verify our arguments. In Section 6,
 we provide the   proofs
of the main results. In the last section, we draw a simple
conclusions of this paper.

\section{Preliminaries}
In this section, we present some preliminaries
A fast review of the statistical learning theory as well as greedy learning is
given in Sec.2.1 and Sec.2.2, respectively.

\subsection{Statistical learning theory}
Suppose that $\mathbf{z}=(x_{i},y_{i})_{i=1}^{m}$ are drawn
independently
and identically
from $Z:=X\times Y$ according to an
unknown probability distribution $\rho $ which admits the
decomposition
$$
                 \rho (x,y)=\rho _{X}(x)\rho (y|x).
$$
Assume that $f:X\rightarrow Y$
characterizes the
correspondence between the input and output, as induced by $\rho $.
A natural measure of the error incurred by using $f$ of this
purpose is the generalization error, defined by
$$
             \mathcal{E}(f):=\int_{Z}(f(x)-y)^{2}d\rho ,
$$
which is minimized by the regression function \cite{Cucker2001}
$$
                f_{\rho }(x):=\int_{Y}yd\rho (y|x).
$$
In general, since $\rho $ is unknown, $f_{\rho }$ is also unknown.
However, we have access to random examples ${\bf z}$ from
$X\times Y$ sampled according to $\rho $.

Let $L_{\rho _{_{X}}}^{2}$ be the Hilbert space of $\rho _{X}$
square integrable functions on $X$, with norm $\Vert \cdot \Vert
_{\rho }.$ It is known that, for every $f\in L_{\rho _{X}}^{2}$,
there holds
\begin{equation}\label{equality}
          \mathcal{E}(f)-\mathcal{E}(f_{\rho })=\Vert f-f_{\rho }\Vert _{\rho
           }^{2}.
\end{equation}%
So, the goal of learning is to find a best approximation of the
regression function $f_{\rho }$.

Let $\mathcal{H}$ be a hypothesis space and $f_{\mathcal{H}}\in
\mathcal{H}$ be a best approximation of $f_{\rho },$ i.e.,
$f_{\mathcal{H}}=\arg \min_{g\in \mathcal{H}}\Vert g-f_{\rho }\Vert
_{\rho }^{2}.$ Whenever there is an estimator $f_{\bf z}\in
\mathcal{H}$ based on the samples ${\bf z}$ in hand, we
  have%
\begin{equation}\label{vias and variance}
            \mathcal{E}(f_{\bf z})-\mathcal{E}(f_{\rho })=\Vert f_{\rho }-f_{\mathcal{H}%
              }\Vert _{\rho }^{2}+\mathcal{E}(f_{\mathcal{H}})-\mathcal{E}(f_{\bf
              z}).
\end{equation}%
It is known \cite{Cucker2007} that a small $\mathcal{H}$ will derive a large bias $%
\Vert f_{\rho }-f_{\mathcal{H}}\Vert _{\rho }^{2},$ while a large $\mathcal{H%
}$ deduces a large variance
$\mathcal{E}(f_{\mathcal{H}})-\mathcal{E}(f_{\bf z}).$ Thus the bias
and  variance are conflicting, and an ideal or best hypothesis space
$\mathcal{H}^{\ast }$ should be  the one that best compromises the
bias and the variance. This is the well known "bias-variance"
dilemma in statistical learning theory.

Without loss of generality, we always assume $y \in [-M,M]$, and the
number of samples is finite.
Thus, it is reasonable to truncate the
estimator to $[-M,M]$. That is, if we define
$$
          \pi_Mu=\left\{\begin{array}{l l}
             u,       & \mbox{if}\ |u|\leq M  \\
               Msign(u),& \mbox{otherwise}
\end{array}
\right.
$$
as the truncation operator, then it is easy to deduce
\cite{Zhou2006}
 $$
    \|\pi_Mf_{\bf z}-f_\rho\|^2_\rho\leq \|f_{\bf z}-f_\rho\|^2_\rho.
 $$

\subsection{Greedy learning}

Let $H$ be a Hilbert space endowed with norm $\|\cdot\|_H$ and inner product
$\langle\cdot,\cdot,\rangle_H$.
Let
$\mathcal D=\{g\}_{g\in\mathcal D}$ be a given dictionary satisfying
$\|g\|_H\leq 1$. Define $\mathcal L_1=\{f:f=\sum_{g\in D}a_gg\}$ as
a Banach space endowed with the  norm
$$
       \|f\|_{\mathcal
           L_1}:=\inf_{\{a_g\}_{g\in\mathcal D}}\left\{\sum_{g\in \mathcal D}|a_g|:f=\sum_{g\in \mathcal
              D}a_gg\right\}.
$$

There exist several types of  greedy algorithms
\cite{Temlaykov2003}. Three most commonly used are the pure greedy (PGA),
orthogonal greedy (OGA) and  relaxed greedy (RGA) algorithms.
In all the
above greedy algorithms, we begin by setting $f_0:=0$. The new
approximation $f_k\; (k \ge 1)$ is defined based on
$r_{k-1}:=f-f_{k-1}$. In OGA, $f_k$ is defined as
$$
                f_k=P_{V_k} f,
$$
where $P_{V_k}$ is the orthogonal projection onto
$V_k=\mbox{span}\{g_1,\dots,g_k\}$ and $g_k$ is defined as
$$
        g_k=\arg\max_{g\in\mathcal D}|\langle r_{k-1},g\rangle_H|.
$$

Given a set of training samples ${\bf z}=(x_i,y_i)_{i=1}^m$, the
empirical inner product and norm are defined by
$$
        \langle f,g\rangle_m:=\frac1m\sum_{i=1}^mf(x_i)g(x_i), \
        \|f\|_m^2:=\frac1m\sum_{i=1}^m|f(x_i)|^2.
$$
 The initial setting of OGL is the same as that of OGA. However,
  OGL should take  the following four issues
into account:

{\bf (I) Dictionary-selection:} Select a dictionary $\mathcal
D_n:=\{g_1,\dots,g_n\}$ with $\|g_i\|_m\leq 1$.

{\bf (II)  Greedy-definition:}
$$
                  g_k=\arg\max_{g\in\mathcal D_n}|\langle r_{k-1},g\rangle_m|.
$$

{\bf  (III) Iteration-strategy:}
$$
                f_{\bf z }^k=P_{V_{{\bf z},k}} f,
$$
where $P_{V_{{\bf z},k}}$ is the orthogonal projection onto
$V_k=\mbox{span}\{g_1,\dots,g_k\}$ in the metric of $\|\cdot\|_m$.

 {\bf (IV)  Stopping criterion:} Terminate the learning process
when $k$ satisfies a certain assumption.

\section{Greedy-metric in OGL}

Given a real functional $V:\mathcal H\rightarrow\mathbf R$, the
Fr\'{e}chet derivative of $V$ at $f$, $V'_f:\mathcal
H\rightarrow\mathbf R$, is the linear functional such that for
$g\in\mathcal H$,
$$
  \lim_{\|g\|_{\mathcal
  H}\rightarrow0}\frac{|V(f+g)-V(f)-V'_f(g)|}{\|g\|_{\mathcal H}}=0,
$$
and the gradient of $V$ as a map $\mbox{grad}V:\mathcal
H\rightarrow\mathcal H$ is defined by
$$
           \langle \mbox{grad}V(f),g\rangle_\mathcal H=V'_f(g),\ \mbox{for
           all}\ g\in\mathcal H.
$$
The greedy-metric adopted in (II) is to find $g_k\in \mathcal D_n$
such that
$$
              \langle
              -\mbox{grad}(A_m)(f_{\bf z}^{k-1}),g_k\rangle=\sup_{g\in \mathcal D_n}\langle
              -\mbox{grad}(A_m)(f_{\bf z}^{k-1}),g\rangle,
$$
where $A_m(f)=\sum_{i=1}^m|f(x_i)^2-y_i|^2$. Therefore, the
classical greedy-metric is based on the steepest  gradient descent
of $r_{k-1}$ with respect to the dictionary $\mathcal D_n$. By
normalizing the residual $r_k$, $k=0,1,2,\dots,n$, (II) equals to
search $g_k$ satisfying
$$
              g_k=\arg\max_{g\in\mathcal D_n}\frac{|\langle r_{k-1},g\rangle_m|}{\|r_{k-1}\|_m}.
$$
Geometrically, it means to search a $g_k$ minimizing the angle
$\theta_k$ between $r_{k-1}/\|r_{k-1}\|_m$ and $g_k$, which is
depicted as the following Fig.2.
\begin{figure}[H]
\centering
\subfigure{\includegraphics[height=5cm,width=10.0cm]{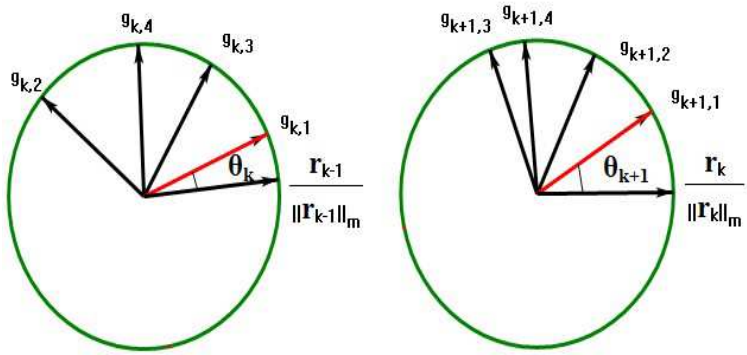}}
\caption{Classical greedy-metric}
\end{figure}

Recalling the definition of OGL, it is not difficult to judge that
the angles satisfy
$$
       |\cos\theta_1|\leq|\cos\theta_2|\leq\cdots\leq|\cos\theta_k|\leq|\cos\theta_{k+1}|\leq\cdots\leq|\cos\theta_n|,
$$
or
$$
          \frac{|\langle r_{0},g_1\rangle_m|}{\|r_{0}\|_m}
          \geq
          \cdots
          \geq
          \frac{|\langle r_{k-1},g_k\rangle_m|}{\|r_{k-1}\|_m}
          \geq
          \cdots
          \geq
          \frac{|\langle r_{n-1},g_n\rangle_m|}{\|r_{n-1}\|_m},
$$
since $\frac{|\langle
r_{k-1},g_k\rangle_m|}{\|r_{k-1}\|_m}=|\cos\theta_k|$. If the
algorithm  stops at the $k$-th iteration, then there is a $\delta
\in [|\cos\theta_k|,|\cos\theta_{k+1}|]$, which quantifies whether
an atom should be utilized to construct the final estimator. To be
detailed, if $|\cos{\theta_k}|\geq\delta$, then $g_k$ is regarded as
an ``active atom''
 and can be employed to build the estimator,
otherwise, $g_k$ is a ``dead one '' which should be deported.

Based on the above observations, we are interested in selecting
 arbitrary ``active atom'', $g_k$, in $\mathcal D_n$, that is
\begin{equation}\label{our metric}
           \frac{|\langle r_{k-1},g_k\rangle_m|}{\|r_{k-1}\|_m}\geq
           \delta.
\end{equation}
 If there is no $g_k$ satisfying (\ref{our metric}),
then the algorithm terminates. We call the greedy metric (\ref{our
metric}) as the ``$\delta$-greedy thresholds'' metric. In practice,
the number of ``active atom'' is usually not unique. Under this
circumstance, we can choose arbitrary (just) one ``active atom'' at
each greedy iteration. Once the ``active atom'' is selected, then
the algorithm  comes into the next greedy iteration and the ``active
atom'' is redefined. Through such a greedy-metric, we can develop a
new orthogonal greedy learning scheme, called thresholding
orthogonal greedy learning (TOGL). Instead of (II) and (IV) in OGL,
the corresponding parts of TOGL are described as follows

{\bf (II.1)  Greedy-definition:} Let $g_k$ be an arbitrary atom from
$\mathcal D_n$ satisfying
$$
              \frac{|\langle r_{k-1},g_k\rangle_m|}{\|r_{k-1}\|_m}\geq
           \delta.
$$

{\bf (IV.1)  Stopping criterion:} Terminate  the learning process
either there is not atom satisfying (\ref{our metric}) or $k$
satisfies a certain assumption.

Before giving the theoretical analysis of TOGL, we should highlight
the difference between (II), (IV) and (II.1), (IV.1), respectively.
Without considering the termination-rule, the classical greedy
metric (II) satisfies (II.1) since (II) always selects the greediest
atom in each greedy iteration. (II.1) slows down the  speed of
gradient descent and therefore may conduct a more flexible
model-selection strategy. According to the bias and variance balance
principle \cite{Cucker2007},
 the bias decreases while the variance
increases   as a new atom is selected to build the estimator.  If a
lower-correlation atom  is added, then the bias decreases slower and
the variance also increases slower. Then, the  balance  can be
achieved in TOGL   within a  more gradually flavor than OGL.
Compared with (IV), (IV.1) provides another termination condition
that if all the atoms, $g$, in $\mathcal D_n$ satisfy
\begin{equation}\label{Stop 1}
              \frac{|\langle r_{k-1},g\rangle_m|}{\|r_{k-1}\|_m}<
           \delta,
\end{equation}
then the algorithm terminates.  Programmers have asked us frequently
why there is the requirement of termination concerning $k$ besides
(\ref{Stop 1}), since their practical experience implies that the
termination condition (\ref{Stop 1}) is sufficient.  We emphasize
that the terminal condition concerning $k$ is necessary in TOGL, as
the numerical simulations usually do not face the worst case.
Indeed, using only the stopping  condition (\ref{Stop 1}) may drive
the algorithm to select all atoms from $\mathcal D_n$. For example,
if the target function $f$ is almost orthogonal to the space spanned
by the dictionary and the atoms in the dictionary  are almost linear
dependent (See Fig.3), then the selected $\delta$ should be very
small and such a small $\delta$ can not distinguish which   is the
``active atom ''. Consequently, the corresponding learning scheme
selects all the atoms of dictionary and therefore degrades the
generalization capability of OGL.
\begin{figure}[H]
\centering
\includegraphics[height=5cm,width=5.0cm]{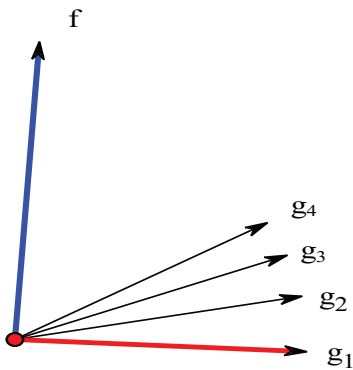}
\caption{Flaw of the single stopping condition}
\end{figure}

Now we present a theoretical assessment of TOGL.
At first, we give a few notations and concepts, which will be used throughout the paper.
Let $\mathcal L_1(\mathcal
D_n):=\{f:f=\sum_{g\in \mathcal D_n}a_gg\}$ endowed with the norm
$\|f\|_{\mathcal L_1(\mathcal D_n)}:=\inf\left\{\sum_{g\in \mathcal
D_n}|a_g|:f=\sum_{g\in \mathcal D_n}a_gg\right\}.$ For $r>0$, the
space $\mathcal L_1^r$ is defined to be the set of all functions $f$
such that,  there exists $h\in\mbox{span}\{\mathcal D_n\}$ such that

\begin{equation}\label{prior}
           \|h\|_{\mathcal L_1(\mathcal D_n)}\leq\mathcal B, \ \mbox{and}\
           \|f - h\| \leq {\mathcal  B}{n^{ - r}},
\end{equation}
where $\|\cdot\|$ denotes the uniform norm for the continuous
function space $C(X)$. The infimum of all such $\mathcal B$ defines
a norm (for $f$ ) on $\mathcal L_1^r$. It follows from
\cite{Barron2008} that  (\ref{prior}) defines a interpolation space
and is a natural assumption for the regression function in greedy
learning. Indeed, this assumption has already been adopted   in
\cite{Barron2008,Lin2013a} to analyze the learning capability of
greedy learning. The following Theorem \ref{THEOREM1} illustrate the
performance of TOGL and consequently, reveals the feasibility of the
greedy-metric (II.1).

\begin{theorem}\label{THEOREM1}
Let $0<t<1$, $0<\delta\leq 1/2$,  and $f_{\bf z}^{k,\delta}$  be the
estimator deduced by TOGL. If $f_\rho\in \mathcal L_1^r$, then there
exits a ${k^*} \in \mathbf N$ such that
$$
            {\cal E}({\pi _M}f_{\bf{z}}^{{k^*},\delta}) - {\cal E}({f_\rho }
            ) \le C{{\cal B}^2}({(m{\delta ^2})^{ - 1}}\log m\log\frac{1}{\delta }\log\frac{2}{t}
            + {\delta ^2} + {n^{ - 2r}})
$$
holds  with probability at least $1-t$, where $C$ is a positive
constant depending only on $d$ and $M$.
\end{theorem}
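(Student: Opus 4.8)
The goal is to bound the excess generalization error $\mathcal E(\pi_M f_{\bf z}^{k^*,\delta})-\mathcal E(f_\rho)$, which by \eqref{equality} equals $\|\pi_M f_{\bf z}^{k^*,\delta}-f_\rho\|_\rho^2$. The plan is to follow the standard error-decomposition strategy of statistical learning theory, splitting the error into an \emph{approximation (bias)} part and a \emph{sample (variance)} part, and then to bound each separately. Concretely, I would write
\begin{equation*}
\|\pi_M f_{\bf z}^{k^*,\delta}-f_\rho\|_\rho^2
\le 2\big(\|\pi_M f_{\bf z}^{k^*,\delta}-f_\rho\|_\rho^2-\|\pi_M f_{\bf z}^{k^*,\delta}-f_\rho\|_m^2\big)
+ 2\|\pi_M f_{\bf z}^{k^*,\delta}-f_\rho\|_m^2,
\end{equation*}
so that the empirical error $\|\cdot\|_m^2$ is controlled directly by the algorithm while the gap between the $\rho$-norm and the empirical norm is handled by a concentration/covering-number argument. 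The three terms appearing in the final bound suggest exactly this split: $\delta^2$ and $n^{-2r}$ are approximation-type terms, and $(m\delta^2)^{-1}\log m\log(1/\delta)\log(2/t)$ is the sample-error term carrying the confidence $1-t$.

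\textbf{Controlling the empirical side via the greedy metric.} The heart of the matter is to show that the $\delta$-greedy thresholds metric (II.1) still drives the empirical residual down at a controlled rate. First I would establish an iteration inequality: if at step $k$ there exists an active atom, i.e. $|\langle r_{k-1},g_k\rangle_m|/\|r_{k-1}\|_m\ge\delta$, then the orthogonal projection defining $f_{\bf z}^k$ satisfies a decay of the form
\begin{equation*}
\|r_k\|_m^2 \le \|r_{k-1}\|_m^2\big(1-\delta^2\big)
\quad\text{or}\quad
\|r_k\|_m^2 \le \|r_{k-1}\|_m^2 - \delta^2\|r_{k-1}\|_m^2,
\end{equation*}
which is the $\delta$-weak analogue of the classical orthogonal greedy recursion; this follows because choosing \emph{any} atom with correlation at least $\delta$ reduces the squared residual by at least the $\delta^2$ fraction attainable in the one-dimensional projection. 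Iterating this, together with the $\mathcal L_1^r$ assumption \eqref{prior} which supplies an $h\in\mathrm{span}\{\mathcal D_n\}$ with $\|h\|_{\mathcal L_1(\mathcal D_n)}\le\mathcal B$ and $\|f_\rho-h\|\le\mathcal B n^{-r}$, should yield an empirical approximation bound roughly of the shape $\|r_k\|_m^2\lesssim \mathcal B^2(k^{-1}+\delta^2+n^{-2r})$, matching the classical Maurey/Barron-type rate for orthogonal greedy schemes but with the extra $\delta^2$ term reflecting the weaker greed. The termination rule (IV.1) then guarantees that when the algorithm stops (either because \eqref{Stop 1} holds for all atoms or because $k$ reaches a prescribed $k^*$), the final empirical residual is of order $\delta^2+n^{-2r}$ up to the $k^*$-dependent factor.

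\textbf{Controlling the variance via concentration, and choosing $k^*$.} For the gap term I would bound the covering number of the hypothesis class $\{\pi_M P_{V_{{\bf z},k}}f : k\le k^*\}$, whose dimension is at most $k^*$, and apply a concentration inequality (Bernstein or a ratio-type probability inequality as in \cite{Zhou2006}) uniformly over this class to obtain, with probability at least $1-t$, a variance bound of order $(k^*/m)\log m\log(2/t)$ or similar. The decisive step is then to \emph{couple} $k^*$ to $\delta$: because each accepted atom has correlation at least $\delta$ and the residual shrinks by the factor above at every accepted step, the total number of iterations before termination is forced to satisfy $k^*\lesssim \delta^{-2}\log(1/\delta)$ (this is where the $\log(1/\delta)$ factor in the statement enters). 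Substituting this bound on $k^*$ into the variance term converts $(k^*/m)$ into $(m\delta^2)^{-1}\log(1/\delta)$, producing precisely the first term of the claimed inequality. Combining the empirical approximation bound, the variance bound, and the truncation inequality $\|\pi_M f_{\bf z}-f_\rho\|_\rho^2\le\|f_{\bf z}-f_\rho\|_\rho^2$ then gives the result.

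\textbf{The main obstacle.} I expect the hardest and most delicate step to be proving the a priori bound $k^*\lesssim\delta^{-2}\log(1/\delta)$ on the number of iterations, since, unlike classical OGL, the $\delta$-metric permits selecting \emph{arbitrary} active atoms rather than the greediest one; one must argue that the per-step residual decrease is still uniformly bounded below in a way that caps the iteration count, without relying on optimality of the chosen atom. A secondary difficulty is making the empirical-to-$\rho$ transition rigorous when the dimension $k^*$ of the approximating space itself depends on the data ${\bf z}$ through the stopping rule; this data-dependence requires a uniform-over-$k$ concentration estimate rather than a fixed-dimension one, so the covering-number argument must be stated for the union over all admissible stopping indices.
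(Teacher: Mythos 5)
Your high-level architecture is essentially the paper's: the paper also (i) bounds the number of selected atoms by $C\delta^{-2}\log\frac1\delta$ (citing Temlyakov, rather than re-deriving it from the per-step decay as you do), (ii) uses a uniform concentration estimate with the dimension replaced by that quantity, and (iii) uses a comparison function built from the $\mathcal L_1^r$ assumption. However, your central empirical claim contains a genuine gap. You assert that iterating the decay $\|r_k\|_m^2\le(1-\delta^2)\|r_{k-1}\|_m^2$ together with the existence of $h$ yields $\|r_k\|_m^2\lesssim\mathcal B^2(k^{-1}+\delta^2+n^{-2r})$, and that at termination ``the final empirical residual is of order $\delta^2+n^{-2r}$.'' This is false in the regression setting: $r_k=y-f_{\bf z}^{k}$ with $y_i=f_\rho(x_i)+\varepsilon_i$, so $\|r_k\|_m^2$ retains the noise energy (roughly the noise variance when $k\ll m$) and cannot be driven to $O(\delta^2+n^{-2r})$ no matter how the atoms are chosen. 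The correct statement, which is Lemma \ref{LEMMA2} of the paper, is \emph{relative}: for every $h\in\mbox{Span}(\mathcal D_n)$, $\|y-f_{\bf z}^{\delta}\|_m^2\le 2\|y-h\|_m^2+2\delta^2\|h\|_{\mathcal L_1(\mathcal D_n)}$, proved at termination by the duality argument $\|r_k\|_m^2=\langle r_k,y-h\rangle_m+\langle r_k,h\rangle_m\le\|r_k\|_m\bigl(\|y-h\|_m+\delta\|h\|_{\mathcal L_1(\mathcal D_n)}\bigr)$, not by iterating the per-step decay. The uncontrollable noise term $\|y-h\|_m^2$ must then cancel, and this forces one to work with \emph{excess} risks: the paper compares $\mathcal E_{\bf z}(\pi_Mf_{\bf z}^\delta)$ with $\mathcal E_{\bf z}(f_k^*)$ for a deterministic comparison sequence $f_k^*$ (hypothesis error $\le 2\mathcal B\delta^2$), and handles $f_k^*$ and $\pi_Mf_{\bf z}^\delta$ by two separate concentration estimates. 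Your opening decomposition has the same flaw in miniature: it features $\|\pi_Mf_{\bf z}^{k^*,\delta}-f_\rho\|_m^2$, a quantity the algorithm does not control (it controls $\|y-f_{\bf z}^{k}\|_m$), and bridging the two costs exactly the noise cross-term your argument never confronts.

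A secondary inaccuracy: the Maurey-type $k^{-1}$ term you invoke is not available under the threshold rule, since an \emph{arbitrary} active atom only guarantees the geometric factor $(1-\delta^2)^k$; the $k^{-1}$ rate requires (near-)maximal correlation. This happens to be harmless here because $k^{-1}$ is dominated by $\delta^2$ once $k^*\sim\delta^{-2}\log\frac1\delta$, but it is another sign that the ``iterate the decay plus Maurey'' route should be replaced by the termination-based relative bound above. Your identification of the iteration-count coupling $k^*\lesssim\delta^{-2}\log\frac1\delta$ and of the data-dependence of the selected subspace as the key technical points is correct and matches where the paper invests its effort (Lemma \ref{LEMMA2} and the uniform estimate used for $\mathcal S_2$).
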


  If $\delta= \mathcal O
(m^{-1/4})$, and the size of dictionary, $n$, is selected to be
large enough, i.e., $n \geq \mathcal O({m^{\frac{1}{{4r}}}})$, then
our result shows that the generalization error bound of ${\pi
_M}f_{\bf{z}}^{{k^*},\delta}$ is asymptotically $\mathcal O
(m^{-1/2}(\log m)^2)$. Up to a logarithmic factor, this bound is the
same as that in \cite{Barron2008} and is the ``record'' of OGL. This
implies that weakening the level of greed of OGL within a certain
extent is a feasible way to circumvent the model selection problem
of OGL. It should also be  pointed out that different from OGL
\cite{Barron2008}, there are two parameters, $k$ and $\delta$, in
TOGL. Therefore, Theorem \ref{THEOREM1} only presents a theoretical
verification that introducing the ``$\delta$-greedy thresholds'' to
measure the level of greed does not essentially degrade the
generalization capability of OGL. Taking the practical applications
into account, eliminating the condition concerning $k$ in (IV.1)  is
urgent. This is the scope of the following section, where an
adaptive stopping criterion with respect to $\delta$ is presented.

\section{$\delta$-thresholding orthogonal greedy learning}

In TOGL, besides the greedy threshold parameter $\delta$, the
stopping criterion should be also adjusted appropriately, which may
 dampen the users' spirits to employ it. To circumvent this, in this section, we will develop an adaptive
stopping criterion based on the ``$\delta$-greedy thresholds''
metric. With this, we can develop a  practically user-friendly
orthogonal greedy type learning system.

It has been pointed out in the previous section that the reason of
employing the terminal condition concerning $k$ in (IV.1) is to
circumvent the extreme case for a full running of TOGL. As the high
impact atoms are all selected in such a setting,  they then lead the
relative value of the residual, $\| r_{k-1}\|_m/\|y(\cdot)\|_m$, to
be small, where $y(\cdot)$ is a function satisfies $y(x_i)=y_i,
i=1,\dots,m$. Therefore, a preferable terminal condition is to
quantify this relative value. Noting that $\delta$ has already been
utilized to terminate the algorithm, we append another  terminal
condition as
\begin{equation}\label{Our metric2}
          \| r_{k-1}\|_m \leq \delta\|y(\cdot)\|_m
\end{equation}
to replace the condition concerning $k$ in (IV.1). Based to this,
we obtain a novel applicable learning system by using the following
(IV.2) to substitute (IV.1) in TOGL.

{\bf (IV.2)  Stopping criterion:} Terminate  the learning process if
either (\ref{Our metric2}) holds or there is no atom satisfying
(\ref{our metric}).

\begin{algorithm}[H]\caption{$\delta$-TOGL}
\begin{algorithmic}
\STATE {{ Step 1 (Initialization)}: Given data ${\bf z}=
(x_i,y_i)_{i=1}^m$, dictionary $\mathcal D_n$, the greedy thresholds
$\delta$,
 and  $f_0=0$. Let $k:=0$.}
\STATE{ { Step 2 ($\delta$-greedy thresholds)}: Let $g_k$ be an
arbitrary atom from $\mathcal D_n$ satisfying
$$
              \frac{|\langle r_{k-1},g_k\rangle_m|}{\|r_{k-1}\|_m}\geq
           \delta.
$$
}
 \STATE{{Step 3 (Orthogonal projection iteration)}:
  Let
 $V_{{\bf z},k} =\mbox{Span}\{g_1,\dots,g_{k}\}$. Compute the approximation $f^{k,\delta}_{{\bf z}}$ as:
$$
  {f^{k,\delta}_{{\bf z}}} = {P_{{\bf z},V_{{\bf z},k}}}({  y})
$$
and the residual:
$$r_{k}:=y-f^{\delta,k}_{{\bf z}},$$
 where $P_{{\bf z},V_{{\bf z},k}}$ is the orthogonal projection  onto
space $V_{{\bf z},k}$ in the metric of
$\langle\cdot,\cdot\rangle_m$.}
 \STATE{ { Step 4
(Iteration)}: if
$$
  \max_{g\in \mathcal D_n}|\langle r_{k},g\rangle_m|\leq\delta\|r_k\|_{m} \ \text{or} \
  \|r_k\|_m\leq\delta\|f\|_m,
$$
then the algorithm terminates, otherwise let $k:=k+1$, and we turn
to Step 2.}
 \STATE{ {Output}: Since the stopping criterion depends
only on $\delta$,   we can write the final estimator as $f_{\bf
z}^\delta$.}
\end{algorithmic}
\end{algorithm}

For such  a setting, we succeed in avoiding the cumbersome parameter
$k$ and derive a stopping-criterion based only on $\delta$.  That
is, the main parameter $k$ of OGL \cite{Barron2008} is replaced by
the greedy thresholds $\delta$. Eventually, by utilizing the
``$\delta$-greedy thresholds'' metric and its corresponding adaptive
terminal rule (IV.2), we design a new learning system called
$\delta$-thresholding orthogonal greedy learning ($\delta$-TOGL) as
in the   Algorithm 1.

The following Theorem \ref{THEOREM2} shows
that if $\delta$ is appropriately tuned, then the $\delta$-TOGL
estimator $f_{\bf z}^\delta$ can realize the almost optimal
generalization capability of OGL and TOGL.

\begin{theorem}\label{THEOREM2}
Let $0<t<1$, $0<\delta\leq 1/2$,  and $f_{\bf z}^\delta$  be defined
in Algorithm 1. If $f_\rho\in \mathcal L_1^r$, then the inequality
\begin{equation}\label{Theorem2}
              \mathcal E(\pi_Mf_{\bf z}^\delta)-\mathcal E(f_\rho)
             \leq
             C{{\cal B}^2}({(m{\delta ^2})^{ - 1}}\log m
             \log\frac{1}{\delta }\log\frac{2}{t} + {\delta ^2} + {n^{ - 2r}})
\end{equation}
holds  with probability at least $1-t$, where $C$ is a positive
constant depending only on $d$ and $M$.
\end{theorem}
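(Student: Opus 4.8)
The plan is to reduce the adaptive, data-dependent scheme of Algorithm 1 to the fixed-iteration situation already governed by Theorem \ref{THEOREM1}, by first showing that the number of greedy steps actually performed by $\delta$-TOGL is automatically bounded. Write $k^*=k^*({\bf z})$ for the (random) index at which the stopping rule (IV.2) fires, so that $f_{\bf z}^\delta=f_{\bf z}^{k^*,\delta}=P_{{\bf z},V_{{\bf z},k^*}}(y)$ and $r_{k^*}=y-f_{\bf z}^\delta$. Following Section 2, I would begin with the truncation bound $\|\pi_Mf_{\bf z}^\delta-f_\rho\|_\rho^2\le\mathcal E(\pi_Mf_{\bf z}^\delta)-\mathcal E(f_\rho)$ and split the excess error, relative to a reference $h^*\in\mbox{span}\{\mathcal D_n\}$ supplied by the $\mathcal L_1^r$ assumption (i.e.\ $\|h^*\|_{\mathcal L_1(\mathcal D_n)}\le\mathcal B$ and $\|f_\rho-h^*\|\le\mathcal B n^{-r}$), into an approximation term $\mathcal E(h^*)-\mathcal E(f_\rho)=\|h^*-f_\rho\|_\rho^2\le\mathcal B^2n^{-2r}$, an empirical excess-error term $\mathcal E_{\bf z}(\pi_Mf_{\bf z}^\delta)-\mathcal E_{\bf z}(h^*)$ with $\mathcal E_{\bf z}(f):=\frac1m\sum_{i=1}^m(f(x_i)-y_i)^2$, and a sample-error term comparing $\mathcal E$ with $\mathcal E_{\bf z}$. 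The $n^{-2r}$ summand is then immediate, and the work lies in the remaining two.

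Next I would control $k^*$. Since each selected atom obeys $|\langle r_{k-1},g_k\rangle_m|\ge\delta\|r_{k-1}\|_m$ with $\|g_k\|_m\le1$, optimality of the orthogonal projection yields the one-step contraction $\|r_k\|_m^2\le\|r_{k-1}\|_m^2-\langle r_{k-1},g_k\rangle_m^2\le(1-\delta^2)\|r_{k-1}\|_m^2$, so that $\|r_k\|_m^2\le(1-\delta^2)^k\|y\|_m^2$. Hence (unless the orthogonality clause fires earlier) the relative-residual clause $\|r_k\|_m\le\delta\|y\|_m$ of (IV.2) must be met by $k^*=\mathcal O(\delta^{-2}\log\frac1\delta)$. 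This deterministic bound on the effective model size is the crux: it replaces the hand-tuned iteration number of Theorem \ref{THEOREM1} and, inserted into a covering-number estimate for the class of estimators built from at most $k^*$ atoms of $\mathcal D_n$, delivers a variance term of order $\frac{k^*}{m}$ times logarithmic factors, namely $(m\delta^2)^{-1}\log m\log\frac1\delta\log\frac2t$.

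For the empirical excess error I would split according to which clause of (IV.2) halts the run. If $\|r_{k^*}\|_m\le\delta\|y\|_m$, then, using the pointwise truncation inequality $\|y-\pi_Mf_{\bf z}^\delta\|_m\le\|y-f_{\bf z}^\delta\|_m=\|r_{k^*}\|_m$, the empirical error is at most $\delta^2\|y\|_m^2\le M^2\delta^2$, which is exactly the $\delta^2$ summand. If instead $\max_{g\in\mathcal D_n}|\langle r_{k^*},g\rangle_m|<\delta\|r_{k^*}\|_m$, I would use the projection orthogonality $\langle r_{k^*},f_{\bf z}^\delta\rangle_m=0$ and expand $h^*$ over the dictionary to obtain $|\langle r_{k^*},h^*\rangle_m|\le\|h^*\|_{\mathcal L_1(\mathcal D_n)}\max_g|\langle r_{k^*},g\rangle_m|\le\mathcal B\delta\|r_{k^*}\|_m$, and then compare empirical errors through $\mathcal E_{\bf z}(\pi_Mf_{\bf z}^\delta)-\mathcal E_{\bf z}(h^*)\le 2\langle r_{k^*},h^*\rangle_m-\|f_{\bf z}^\delta-h^*\|_m^2$.

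The hard part will be precisely this orthogonality-termination case: the naive bound $2\langle r_{k^*},h^*\rangle_m\le2\mathcal B\delta\|r_{k^*}\|_m\le2\mathcal B M\delta$ is only of order $\delta$, whereas the theorem demands order $\delta^2$. To recover the sharp order I would separate noise from signal by writing $y=f_\rho+\eta$ with $\eta_i=y_i-f_\rho(x_i)$ in $\langle r_{k^*},h^*\rangle_m$: the mean-zero piece $\langle\eta,h^*\rangle_m$ is a fixed-function concentration term to be absorbed into the sample error, while the signal piece must be matched against $\|f_{\bf z}^\delta-h^*\|_m^2$ and against $\|f_\rho-h^*\|\le\mathcal B n^{-r}$ so that only $\delta^2$- and $n^{-2r}$-order residues survive. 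The second delicacy, shared with Theorem \ref{THEOREM1}, is the passage from $\|\cdot\|_m$ to $\|\cdot\|_\rho$: since both the support $V_{{\bf z},k^*}$ and the index $k^*$ are data-dependent, the concentration step must be made uniform over all $\binom{n}{k^*}$ admissible atom-subsets and all admissible stopping indices, which is where the boundedness enforced by $\pi_M$, a one-sided Bernstein/ratio inequality, and the $\log\frac2t$ confidence factor enter. Collecting the approximation ($n^{-2r}$), stopping ($\delta^2$) and capacity ($(m\delta^2)^{-1}\log m\log\frac1\delta\log\frac2t$) contributions then yields (\ref{Theorem2}).
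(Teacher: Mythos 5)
Your skeleton matches the paper's proof in most respects: the paper also decomposes the excess risk into an approximation part, an empirical-comparison (``hypothesis'') part controlled through the stopping rule, and a sample-error part whose variance factor $(m\delta^2)^{-1}\log m\log\frac1\delta\log\frac2t$ comes from exactly the sparsity bound you derive (the paper quotes it from Temlyakov's relaxation paper as its estimate (\ref{Estimate k}), whereas your contraction argument $\|r_k\|_m^2\leq\|r_{k-1}\|_m^2-\langle r_{k-1},g_k\rangle_m^2\leq(1-\delta^2)\|r_{k-1}\|_m^2$ proves it directly -- that part of your proposal is correct and even more self-contained). Your treatment of the residual-termination clause, and the identity $\mathcal E_{\bf z}(f_{\bf z}^\delta)-\mathcal E_{\bf z}(h^*)=2\langle r_{k^*},h^*\rangle_m-\|f_{\bf z}^\delta-h^*\|_m^2$, are also fine. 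The one structural difference is that you compare against $h^*$ itself, while the paper compares against the relaxed-greedy sequence $f_k^*$ of (\ref{f*}) with $\|f_k^*\|_{\mathcal L_1(\mathcal D_n)}\leq\mathcal B$; this is inessential.

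The genuine gap is the step you yourself flag as ``the hard part'', and your sketched repair does not close it. In the orthogonality-termination case the only information the stopping rule gives is $|\langle r_{k^*},h^*\rangle_m|\leq\mathcal B\delta\|r_{k^*}\|_m$, and $\|r_{k^*}\|_m$ cannot be forced below the empirical noise level (only $\mathcal O(\delta^{-2}\log\frac1\delta)\ll m$ atoms are projected out), so this term is genuinely of order $\mathcal B M\delta$. Splitting $r_{k^*}=(f_\rho-f_{\bf z}^\delta)+\eta$ does not help: the mean-zero piece $\langle\eta,h^*\rangle_m$ is indeed $\mathcal O(m^{-1/2})$ by concentration, but the ``signal'' piece $\langle f_\rho-f_{\bf z}^\delta,h^*\rangle_m=\langle f_\rho-h^*,h^*\rangle_m+\langle h^*-f_{\bf z}^\delta,h^*\rangle_m$ then contains a term of order $\mathcal B^2n^{-r}$ (you need $n^{-2r}$) and a term $2\langle h^*-f_{\bf z}^\delta,h^*\rangle_m$ that can only be absorbed into $-\|f_{\bf z}^\delta-h^*\|_m^2$ at the price of an $\mathcal O(\mathcal B^2)$ constant, since $2\mathcal B b-b^2\leq\mathcal B^2$. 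So after your split the order-$\delta$ obstruction is still there. The paper's route at this point is different: its Lemma \ref{LEMMA2} deduces from the stopping rule the linear bound $\|y-f_{\bf z}^\delta\|_m\leq\|y-h\|_m+\delta\|h\|_{\mathcal L_1(\mathcal D_n)}$ for every $h\in\mbox{span}(\mathcal D_n)$, and then squares it via $(a+b)^2\leq2a^2+2b^2$, obtaining $\mathcal E_{\bf z}(f_{\bf z}^\delta)\leq2\mathcal E_{\bf z}(h)+2\delta^2\|h\|^2_{\mathcal L_1(\mathcal D_n)}$; the $\delta^2$ order is bought by accepting a factor $2$ in front of the comparison empirical risk rather than the factor-$1$ comparison your decomposition requires. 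If you adopt that lemma you must rearrange your decomposition so that the doubled term $\mathcal E_{\bf z}(f_k^*)$ is cancelled or absorbed, which is a nontrivial piece of bookkeeping -- indeed it is precisely the step the paper itself treats most tersely when it passes from Lemma \ref{LEMMA2} to its hypothesis-error bound (\ref{hypothesis error estimation}). In short: everything in your plan except this step is sound, but this step is the crux of the theorem, and neither the naive bound nor your noise/signal splitting yields the claimed $\delta^2$ term.
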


If we choose $n \geq \mathcal O({m^{\frac{1}{{4r}}}})$  and $\delta=
\mathcal O (m^{-1/4})$, then the learning rate of  (\ref{Theorem2})
asymptotically equals to $\mathcal O (m^{-1/2}(\log m)^2)$, which is
the same as that of Theorem \ref{THEOREM1}. Therefore, Theorem
\ref{THEOREM2} implies that using (\ref{Our metric2}) to replace the
terminal condition concerning $k$ in (IV.1) is theoretically
feasible. From the viewpoint of implementation, the stopping
criterion (IV.2) is  far more user-friendly  than that of (IV.1),
since (IV.2) omits the parameter $k$ of (IV.1) without scarifying
the generalization capability of TOGL.

The most highlight of Theorem \ref{THEOREM2} is that it provides a
totally different way to circumvent the overfitting phenomenon of
OGL. It is known  that the stopping criterion is crucial for OGL,
but designing an effective  stopping criterion   is a awkward
problem. Barron et al. \cite{Barron2008} suggested to select $k$
that minimizes a $l^0$ based complexity regularization strategy,
which often needs a full running before the best parameter is
selected. Chen et al. \cite{Chen2013} proposed a stopping criterion
also leads to a long iterative procedure in practice and sometimes
does not work.   In short, all the aforementioned study of
stopping-criterion attempted to design a terminal rule by
controlling the number of iterations directly. Since the
generalization capability of OGL is  sensitive to the number of
iterations, these schemes sometimes fails to   get satisfactory
effects. The terminal rule employed in the present paper is based on
the study of the ``greedy-metric'' issue of greedy learning. Theorem
\ref{THEOREM2} shows that, besides controlling the number of
iterations directly, setting a greedy threshold to redefine the
greed can also conducts an effective stopping criterion. Theorem
\ref{THEOREM2} implies that this new stopping criterion
theoretically works as well as others. Furthermore, when compared
with $k$ in OGL, the generalization capability of the $\delta$-TOGL
is stable to $\delta$,  since the new metric slows down the changes
of bias and variance.

\section{Numerical Studies}

In this section, we present several numerical simulations to reveal
the pros and cons of  $\delta$-TOGL. We divide the description into
seven subsections. Except for the first one, each subsection depicts
a topic concerning $\delta$-TOGL.

\subsection{Experimental settings and purpose}

\underline{Data and dictionary:} The samples ${\bf
z}=\{(x_{i},y_{i})\}_{i=1}^{m_1}$ are generated as follows.
$\{x_i\}_{i=1}^{m_1}$ are drawn independently and identically
according to the uniform distribution on $[-\pi,\pi]$.
$\{y_i\}_{i=1}^{m_1}$ satisfies $y_{i}=f_{\rho }(x_{i})+\mathcal
N(0,\sigma^2)  $ with $\mathcal  N(0,\sigma^2)$ being the white
noise and
$$
                {f_\rho}(x) = \frac{{\sin x}}{x}, \quad x \in [ - \pi ,\pi
                ].
$$
 To comprehensively reveal the
performances of OGL, TOGL and $\delta$-TOGL, we adopt four levels of
noise, that is, $\sigma$ is set  to $\sigma_1=0.1$, $\sigma_2=0.5$,
$\sigma_3=1$ and $\sigma_4=2$. The learning performances of
different algorithms
were then tested by applying the resultant estimators to the test set ${\bf z}_{test}=%
\{(x_{i}^{(t)},y_{i}^{(t)})\}_{i=1}^{m_2}$, which was generated
similarly to ${\bf z}$ but with a promise that $y_{i}^{\prime }s$ were always taken to be $%
y_{i}^{(t)}=f_{\rho }(x_{i}^{(t)}).$

In each simulation,  we use Gaussian radial basis function to build
up the dictionary:
$$
            \left\{e^{-\|x-t_i\|^2/\eta^2}: i=1, \ldots,n\right\},
$$
where $\{t_i\}_{i=1}^n$ are drawn as the best packing points in
$[-\pi,\pi]$. Since, the aim of the simulations is not to  pursue
the best width of Gaussian radial basis function, but to compare
$\delta$-TOGL with other learning schemes on the same dictionary,
 we always  set $\eta=1$ throughout this section.

\underline{Methods:} For OGL and $\delta$-TOGL, we apply the QR
decomposition to solve the corresponding least squares problem and
then obtain the estimators \cite{Sauer2006}. We use four metrics in
(II) and (II.1) respectively to illustrate different levels of
greed. Here, we use abbreviations OGL1 ,  OGL2,   OGL3, TOGL1,
TOGL2, TOGL3,   and $\delta$-TOGL1, $\delta$-TOGL2 ,
 $\delta$-TOGL3  to denote OGL, TOGL  and $\delta$-TOGL with  (II), and (II.1)
 replaced by
$$
  {g_k}: = \arg \mathop {\max }\limits_{g \in \mathcal D_n} | \langle {r_{k - 1}},g \rangle_m
  |,
$$
$$
  {g_k}: = \arg \mathop {second \max }\limits_{g \in \mathcal D_n} | \langle {r_{k - 1}},g \rangle_m
  |,
$$
and
$$
  {g_k}: = \arg \mathop {third \max }\limits_{g \in \mathcal D_n} | \langle {r_{k - 1}},g\rangle_m |.
$$
Here, $\arg \mathop {second \max }\limits_{g \in \mathcal D_n}$ and
$\arg \mathop {third \max }\limits_{g \in \mathcal D_n}$ means
selecting $g_k$ such that the second and third largest values of
$|\langle r_{k-1},g\rangle_m|$ are attained, respectively.
Furthermore, we use OGLR, TOGLR and $\delta$-TOGLR to denote OGL,
TOGL, and $\delta$-TOGL with (II) and (II.1) replaced by
$$
  {g_k} \mbox{ randomly selected from}\
  \mathcal{D}_n ,
$$
and
$$
  {g_k} \mbox{   randomly   selectd from}\
  \mathcal{D_\delta} \ \mbox{with}\
  \mathcal D_\delta=\{g_j:\langle
  g_j,r_{k-1}\rangle_m\geq\delta\|r_{k-1}\|_m\}.
$$
  We also compare our methods with two widely used
learning schemes such as ridge regression \cite{Golub1979} and Lasso
\cite{Tibshirani1995}. We use the analytic solutions to  ridge
regression \cite{Golub1979} and implementing the fast iterative soft
thresholding algorithm (FISTA) \cite{Beck2009} for  Lasso to deduce
the corresponding estimators.

\underline{Aims of simulations} The aims of the simulations can be
concluded into six aspects.
 In Sec.5.2, we demonstrate that SGD is not the unique metric to define greed in OGL. Indeed,
our simulation shows that OGL2 and OGL3 possess almost the same
generalization capabilities as that of OGL1. In Sec.5.3, we
illustrate that ``$\delta$-greedy thresholds'' is a feasible greedy
metric. In Sec.5.4, we aim to provide numerical verification of
the good performance of  $\delta$-TOGL.   In Sec.5.5, we analyze how
the parameter $\delta$ affects the training time and the sparsity of
the estimator. In Sec.5.6, we conduct a phase-transition diagram to
illustrate the usability and limitations of $\delta$-TOGL. In
Sec.5.7, we compare $\delta$-TOGL with other widely used
dictionary-based learning schemes and then show the feasibility of
$\delta$-TOGL.

\underline{Environment:} All numerical studies are implemented by
MATLAB R2013a on a Windows personal computer with Core(TM) i7-3770
3.40GHz CPUs and RAM 4.00GB, and the statistics are averaged based
on 50 independent trails.

\subsection{Greedy metric of OGL}

In this part, we illustrate that SGD is not the unique metric for
OGL. To this end, we conduct  simulations for $f_\rho$ with the
aforementioned four types of noise. We sample $m_1=1000$ training
samples  and $m_2=1000$ testing samples. The number of centers is
set to $n=300$. Under this setting, we run 5  times of
simulations and describe its average test errors, which is measured
by the rooted mean square error (RMSE), as functions of the number
of iterations, $k$, of OGL1, OGL2, OGL3 and OGLR. Since the optimal
$k$ is small and the test RMSE is   very large when $k$ is large, we
only record the figures with $k\in[0,15]$. The experimental results
are shown in the following Fig.4.
\begin{figure}[H]
\centering
\subfigure[]{\label{Fig.sub.b}\includegraphics[height=5cm,width=6cm]{OGAd1.eps}}
\subfigure[]{\label{Fig.sub.b}\includegraphics[height=5cm,width=6cm]{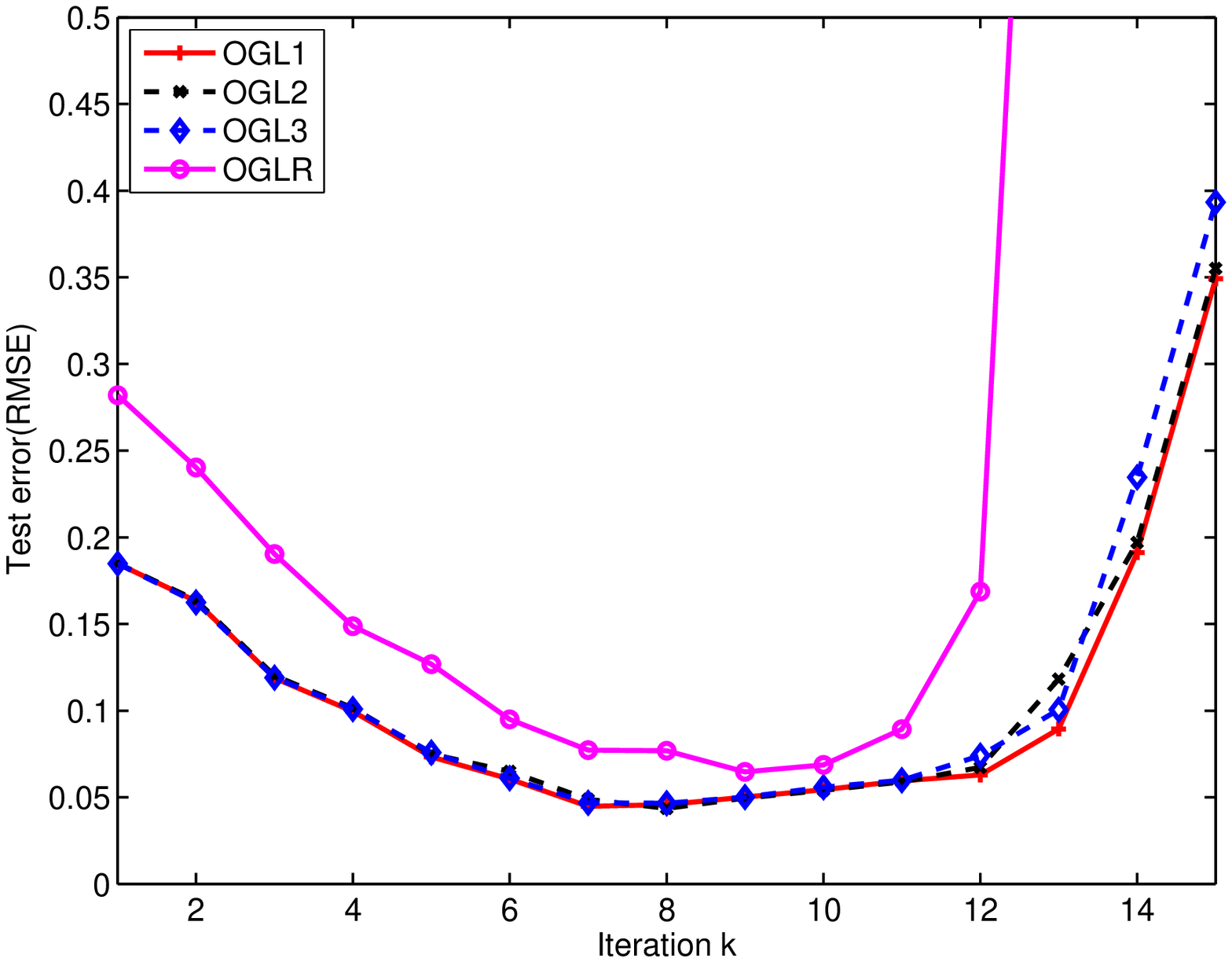}}
\subfigure[]{\label{Fig.sub.b}\includegraphics[height=5cm,width=6cm]{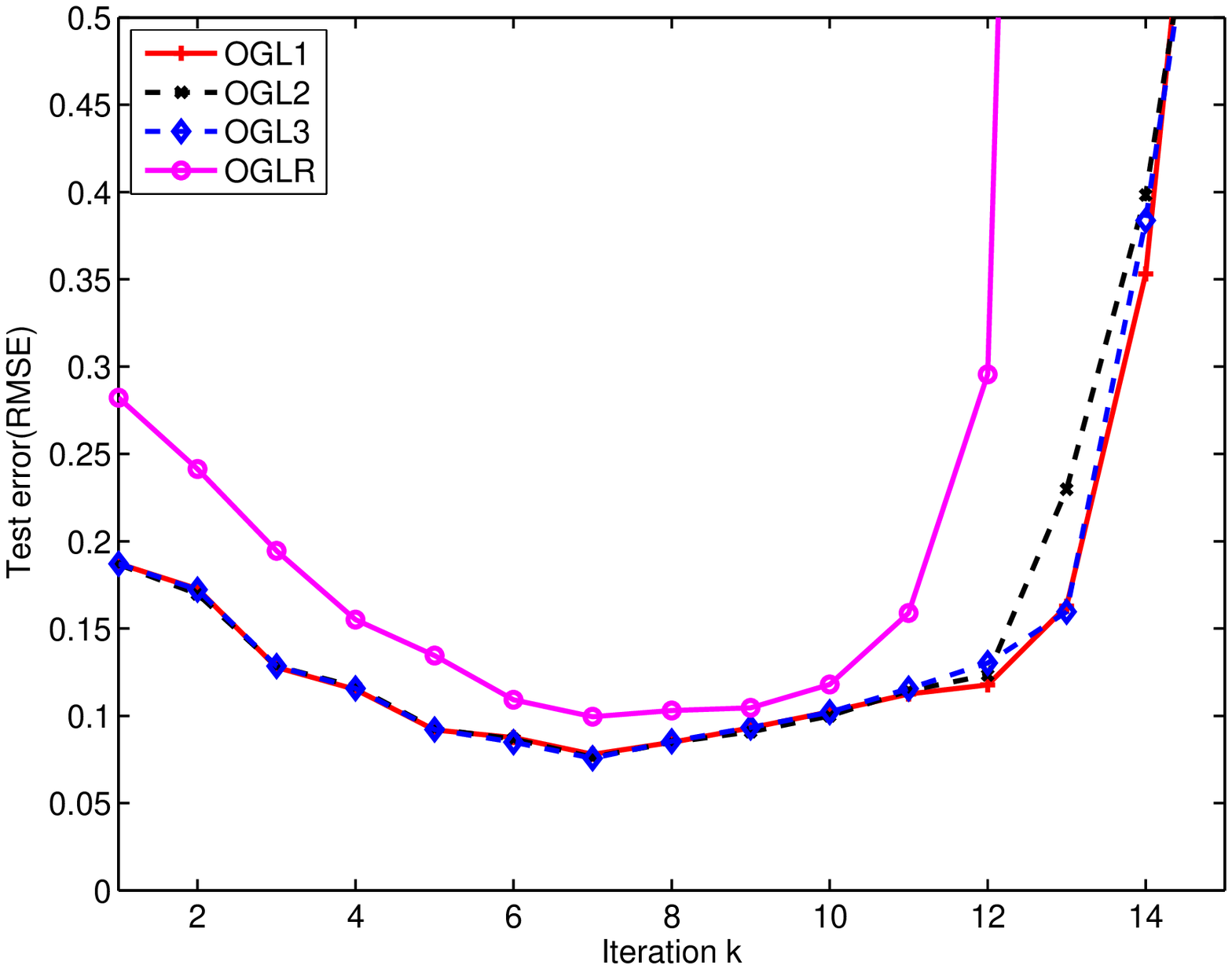}}
\subfigure[]{\label{Fig.sub.b}\includegraphics[height=5cm,width=6cm]{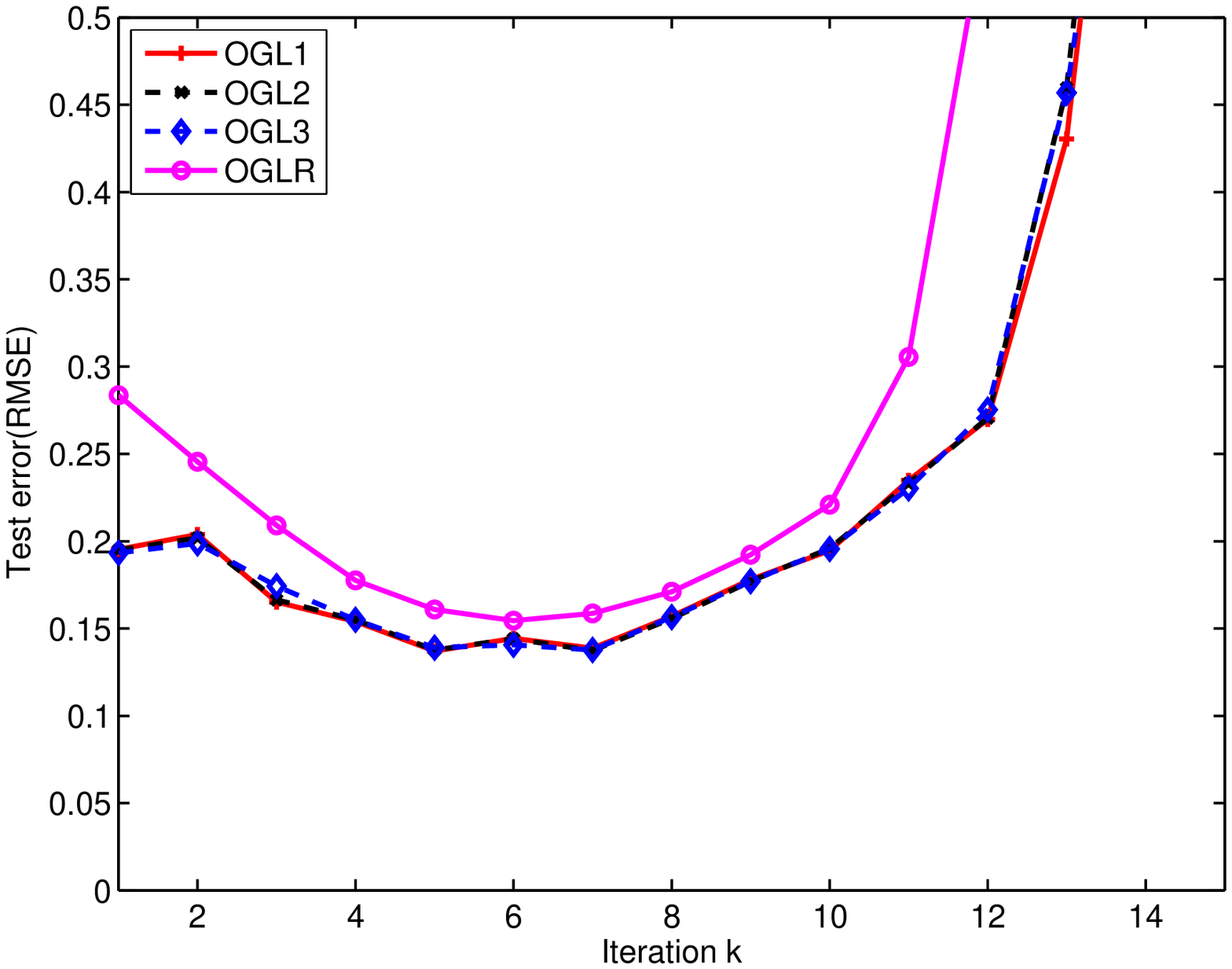}}
\caption{The generalization capabilities of OGL with different
greedy metrics}
\end{figure}

Fig.4 (a)-(d) shows the learning capabilities of  OGL for $f_\rho$
with different levels of noise from $\delta_1$ to $\delta_4$. It can
be found that OGL1, OGL2 and OGL3 possess almost the same
generalization capabilities, since both the smallest test RMSE and
the optimal $k$ of them are almost the same. This implies that, at
least for a certain learning task, SGD is not the unique metric for
OGL. Furthermore, it can also be found in Fig.4 that OGLR performs
worse than that of other learning schemes. This phenomenon shows
that introducing a greedy metric is necessary. We also give a
quantitive comparison of the learning performances of OGL1, OGL2,
OGL3, and OGLR in the following Tab.1. Here \text{$TestRMSE_{OGL}$}
and $k_{OGL}^*$ denote the theoretically optimal test RMSEs and $k$
  of OGL with different greedy metrics.
 Indeed, $k^*_{OGL}$'s are selected according to the test data directly.

\begin{table}[H] \renewcommand{\arraystretch}{0.7}
\begin{center}
 \caption{OGL numerical average results for 5 simulations.}\label{table3}
\begin{minipage}{0.45\textwidth}
 \begin{tabular}{|c|c|c|}\hline
 $Methods$ & $TestRMSE_{OGL}$  &${k_{OGL}^*}$   \\ \hline
  \multicolumn{3}{|c|}{$\sigma=0.1$} \\ \hline
 OGL1   &0.0249  &9   \\ \hline
 OGL2   &0.0248  &9    \\ \hline
 OGL3   &0.0251  &10  \\ \hline
 OGLR   &0.0304  &9  \\ \hline

  \multicolumn{3}{|c|}{$\sigma=0.5$} \\ \hline
 OGL1   &0.0448  &7   \\ \hline
 OGL2   &0.0436  &8 \\ \hline
 OGL3   &0.0466  &8   \\ \hline
 OGLR   &0.0647  &9  \\ \hline
 \end{tabular}
 \end{minipage}
\begin{minipage}{0.45\textwidth}
 \begin{tabular}{|c|c|c|}\hline
 $Methods$ & $TestRMSE_{OGL}$  &${k_{OGL}^*}$   \\ \hline
 \multicolumn{3}{|c|}{$\sigma=1$} \\ \hline
 OGL1   &0.0780  &7   \\ \hline
 OGL2   &0.0762  &7 \\ \hline
 OGL3   &0.0757 &7   \\ \hline
 OGLR   &0.0995  &7  \\ \hline
 \multicolumn{3}{|c|}{$\sigma=2$} \\ \hline
 OGL1   &0.1371  &5   \\ \hline
 OGL2   &0.1374 &7 \\ \hline
 OGL3   &0.1377 &7   \\ \hline
 OGLR   &0.1545  &6  \\ \hline
 \end{tabular}
 \end{minipage}
 \end{center}
 \end{table}

All the above simulations show that greed is necessary    but not
unique in OGL. This stimulates us to launch a study of the
``greedy-metric'' issue of OGL.

\subsection{  ``$\delta$-greedy thresholds'' metric  }

In this part, we verify the feasibility of the ``$\delta$-greedy
thresholds'' metric proposed in Sec.3. The simulation setting of
this subsection is the same as that of Sec.5.2. We also run 5  times
of simulations and describe its test RMSE as functions of the
threshold, $\delta$, of TOGL1, TOGL2, TOGL3 and TOGLR, where  we
choose the optimal number of iterations based on the test set. There
are 100 candidates of $\delta$ which are equally logarithmically
drawn from $[10^{-6}, 1/2]$. Since the optimal value of $\delta$ lies in $[10^{-6},0.001]$, we only plot the range of
$\delta$ in $[10^{-6},0.001]$ to present more details of the
simulations. The experimental results are reported in the following
Fig.5.
\begin{figure}[H]
\centering
\subfigure[]{\label{Fig.sub.a}\includegraphics[height=5cm,width=6cm]{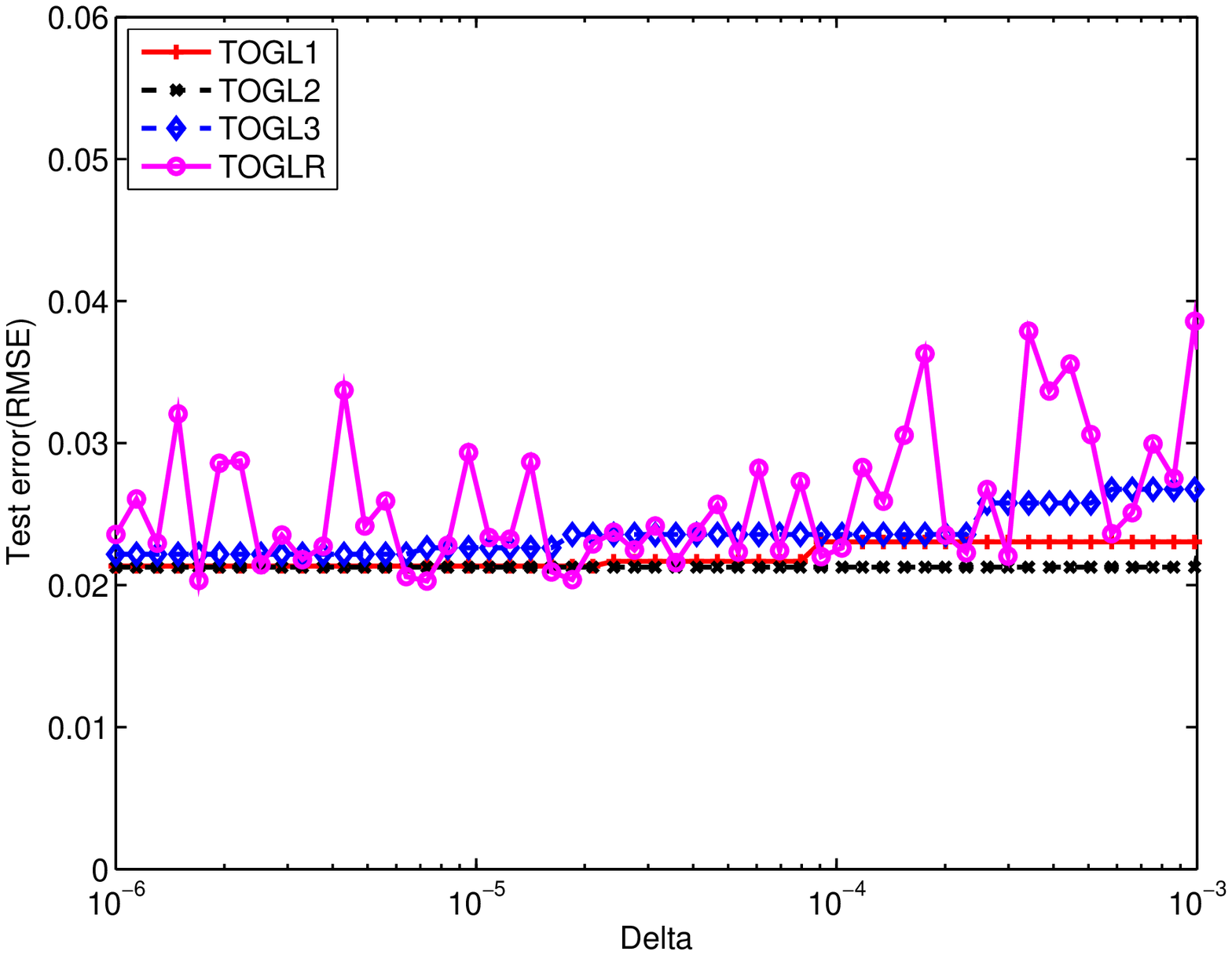}}
\subfigure[]{\label{Fig.sub.a}\includegraphics[height=5cm,width=6cm]{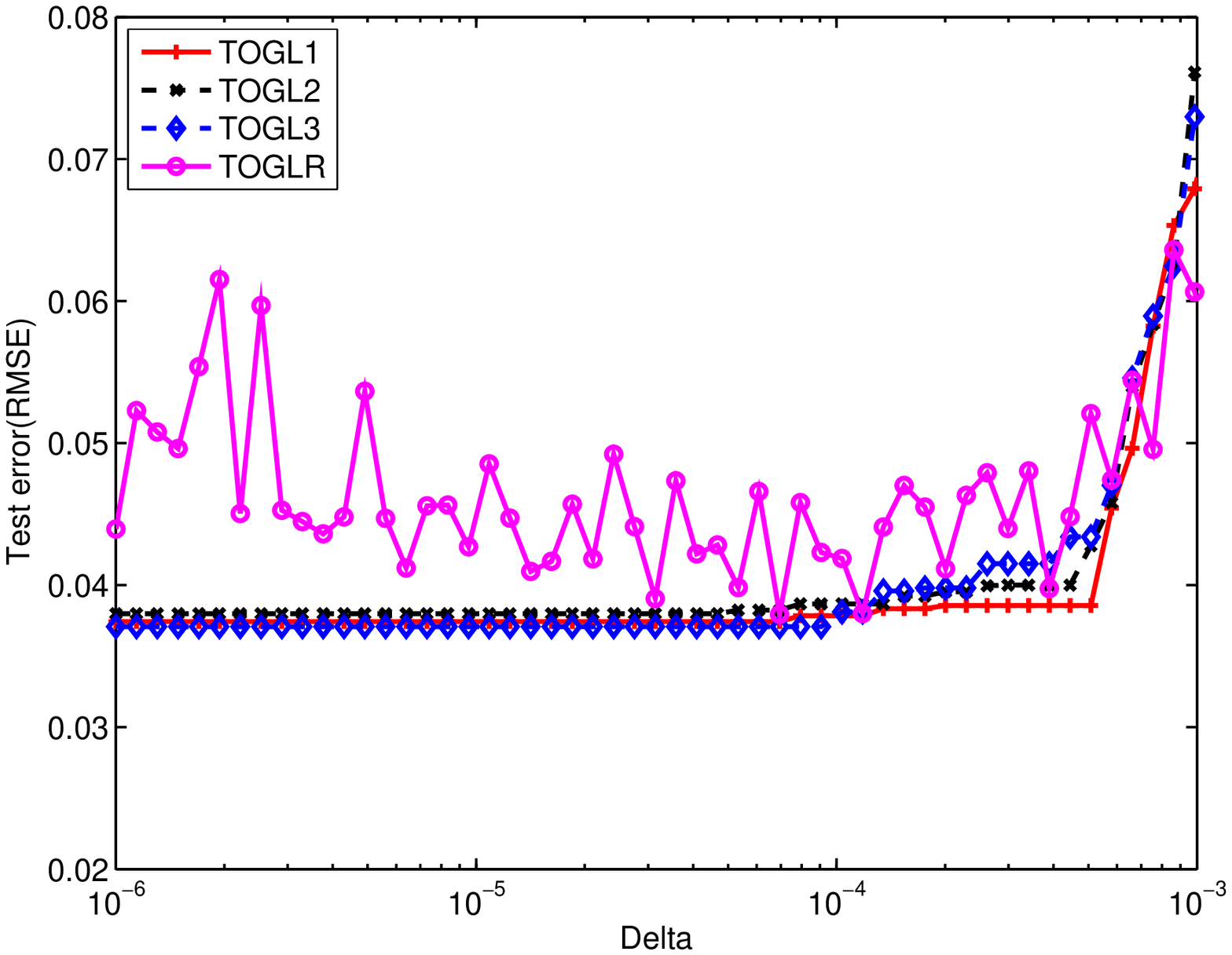}}
\subfigure[]{\label{Fig.sub.a}\includegraphics[height=5cm,width=6cm]{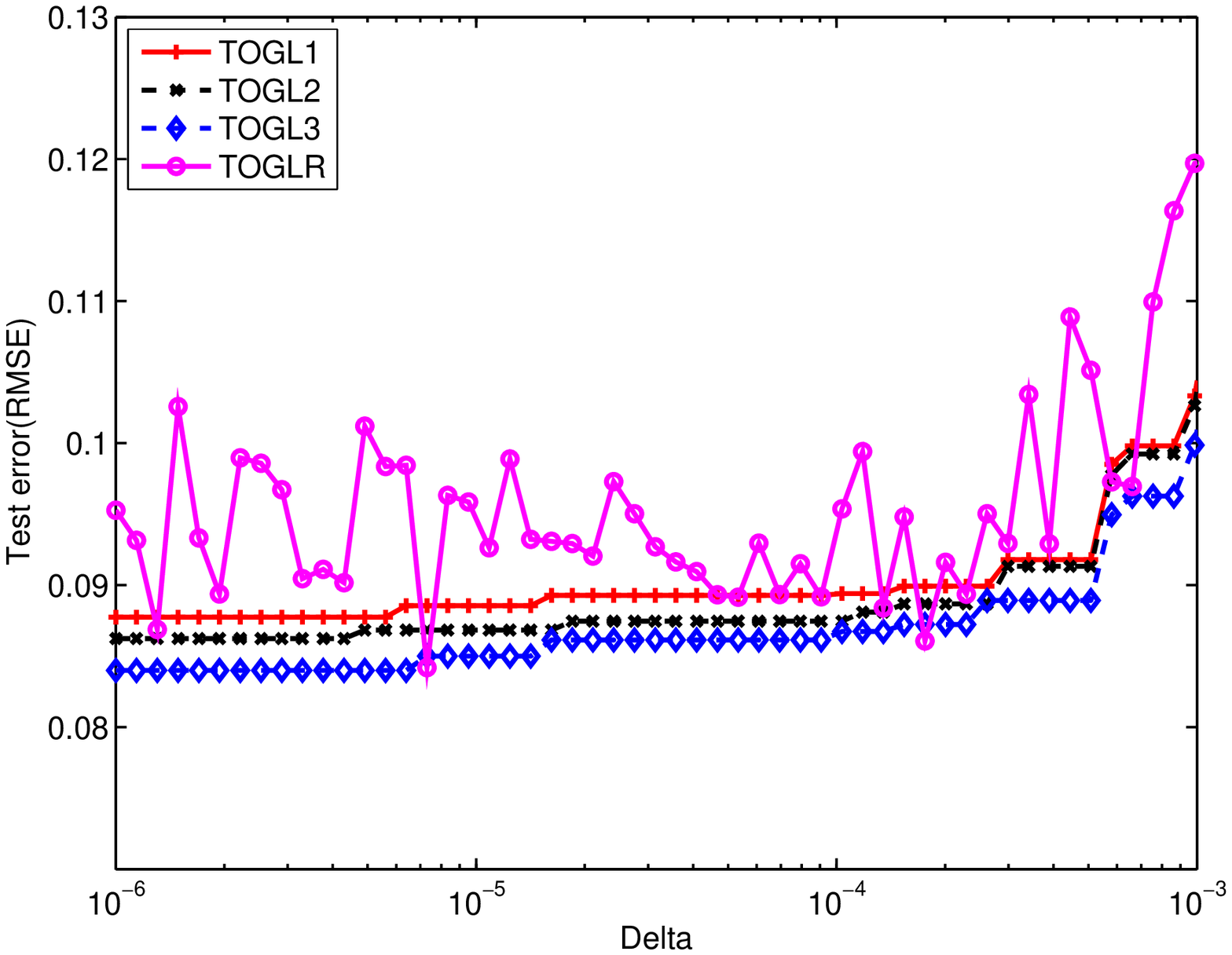}}
\subfigure[]{\label{Fig.sub.a}\includegraphics[height=5cm,width=6cm]{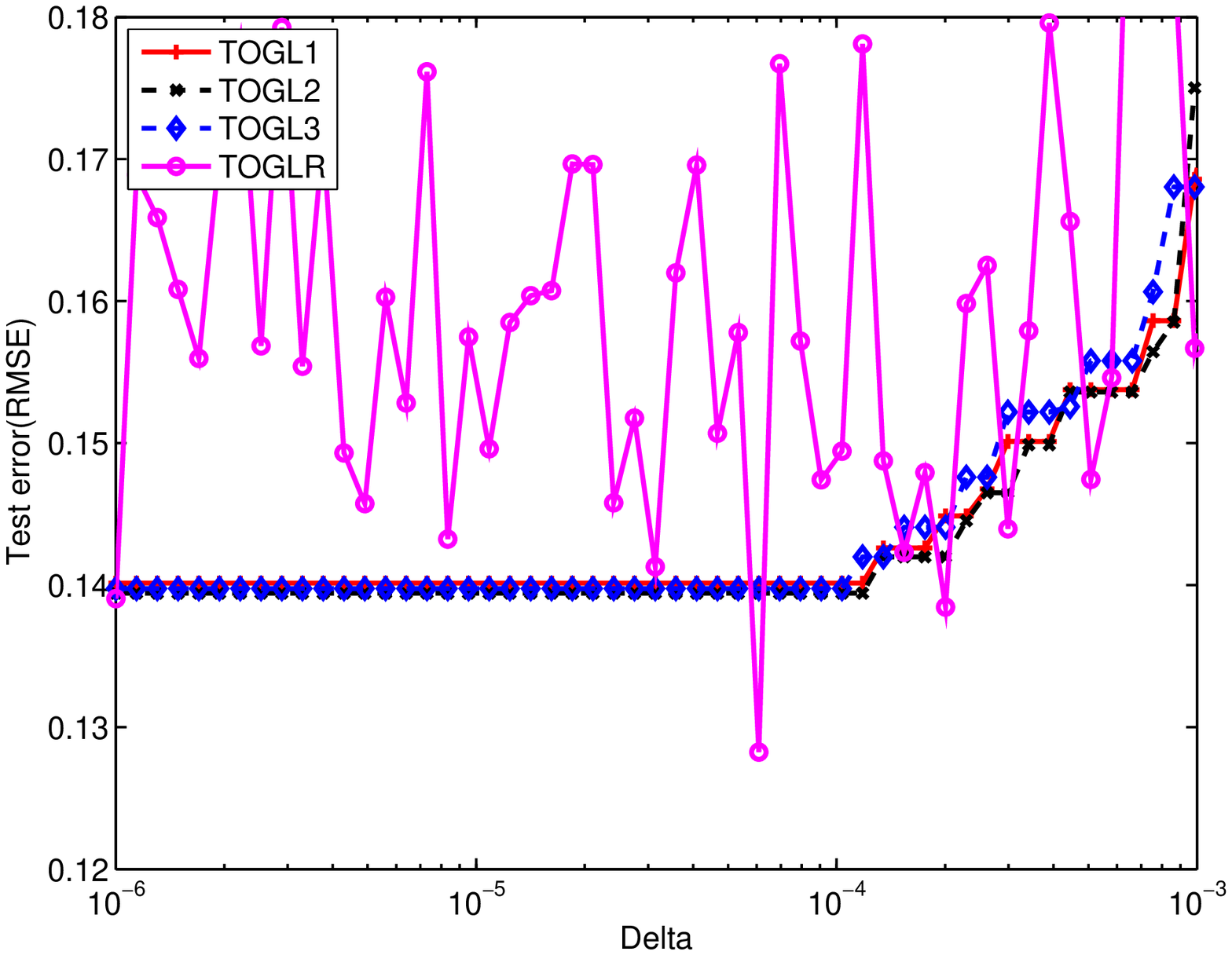}}
\caption{The feasibility of  the ``$\delta$-greedy threshold''
metric}
\end{figure}
Fig.5 shows that, different from Fig.4, the learning capability of
TOGLR is similar as that of TOGL1, TOGL2 and TOGL3. The main reason
is that we select the new atom (even for the random selected atom)
in a greedy fashion by adding the ``$\delta$-greedy thresholds''
metric in TOGL.
 This phenomenon
implies that once an appropriately   $\delta$ is preset, then how to
choose the atom according to (II.1) is not crucial. Therefore, it
numerically verifies Theorem \ref{THEOREM1} and demonstrates that
the introduced ``$\delta$-greedy threshold'' is feasible and
appropriate to quantify the greedy metric. To facilitate the
comparison, we also record the optimal generalization errors in
Tab.2.

In Tab.2, the second column (i.e., ``${\delta}$ and $k$'')  records
the optimal $\delta$ value and their corresponding $k$ values (in
the bracket) derived from TOGL. We should highlight that these $k$
are obtained by using the   terminal condition (\ref{Stop 1}) only.
We also use ${k_{TOGL}^*}$   to denote the theoretically optimal $k$
of TOGL, which is selected based on the test set. It can be found in
Tab.2 that when $\delta$ equals to $0.1$ or $0.5$, the corresponding
$k$ is almost the same as ${k_{TOGL}^*}$, which means that using the
terminal condition (\ref{Stop 1}) is sufficient to select the
optimal iteration number. However, if the noise is enlarged, that
is, $\delta=1$ or $2$, then the terminal condition (\ref{Stop 1})
usually fails to find out the optimal $k$ and another stopping
condition need to be employed. This explains why we introduce a
terminal condition concerning $k$ in (IV.1) and an adaptive terminal
condition (\ref{Our metric2}) in (IV.2). Compared with Tab.1,
we can find from Tab.2 that the optimal test RMSEs
($TestRMSE_{TOGL}$ and $TestRMSE_{OGL}$) are comparable,  which
illustrates that the ``$\delta$-greedy thresholds'' metric is
feasible. The new greedy metric then provides an alternative way to
enrich the  model-selection strategy without scarifying the
generalization capability of OGL.

\begin{table}[H] \renewcommand{\arraystretch}{0.7}
\begin{center}
 \caption{TOGL numerical average results for 5 simulations.}\label{table4}
 \begin{tabular}{|c|c|c|c|}\hline

$ Methods$ & ${\delta}$ and $k$& $TestRMSE_{TOGL}$ &${k_{TOGL}^*}$            \\ \hline
  \multicolumn{4}{|c|}{$\sigma=0.1$} \\ \hline
 TOGL1   &[\text{1.00e-6,3.58e-5}]([9,13])      &0.0213  &8   \\ \hline
 TOGL2   &[\text{1.00e-6,1.70e-6}]([11,12])     &0.0213  &8    \\ \hline
 TOGL3   &[\text{1.00e-6,1.70e-6}]([12,13])     &0.0222  &10   \\ \hline
 TOGLR   &\text{9.52e-6}(12)                    &0.0203  &11   \\ \hline

  \multicolumn{4}{|c|}{$\sigma=0.5$} \\ \hline
 TOGL1   &[\text{1.00e-6,6.95e-5}]([8,13])    &0.0374  &8   \\ \hline
 TOGL2   &[\text{1.00e-6,4.67e-5}]([9,13])    &0.0380  &8   \\ \hline
 TOGL3   &[\text{1.00e-6,9.06e-5}]([8,13])    &0.0371  &8   \\ \hline
 TOGLR   &\text{6.95e-5}(9)                   &0.0379  &8   \\ \hline

 \multicolumn{4}{|c|}{$\sigma=1$} \\ \hline
 TOGL1   &[\text{1.00e-6,5.60e-6}]([11,13]) &0.0877  &8  \\ \hline
 TOGL2   &[\text{1.00e-6,4.30e-6}]([11,13]) &0.0862  &8  \\ \hline
 TOGL3   &[\text{1.00e-6,6.40e-6}]([11,13]) &0.0840  &8  \\ \hline
 TOGLR   &\text{7.30e-6}(12)                &0.0842  &8  \\ \hline

 \multicolumn{4}{|c|}{$\sigma=2$} \\ \hline
 TOGL1   &[\text{1.00e-6,1.18e-4}]([8,13])  &0.1402  &6   \\ \hline
 TOGL2   &[\text{1.00e-6,1.18e-4}]([8,13])   &0.1394  &6  \\ \hline
 TOGL3   &[\text{1.00e-6,1.03e-4}]([8,13]) &0.1398  &6      \\ \hline
 TOGLR   &\text{6.09e-5}(10)                  &0.1282  &5  \\ \hline

 \end{tabular}
 \end{center}
 \end{table}

\subsection{The generalization capability of  $\delta$-TOGL}

In this part, we justify the good performance of $\delta$-TOGL
proposed in Sec.4. The detailed experimental setting is the same as
that in Sec.5.3. Different from TOGL, $\delta$-TOGL provides an
adaptive terminal rule and therefore, eliminates the parameter $k$
in TOGL. Similarly to Sec.5.3, we only plot the range of $\delta$ in
$[10^{-6},0.001]$ to reveal more details of the simulations. The
following Fig.6 reports the simulations results.
\begin{figure}[H]
\centering
\subfigure[]{\label{Fig.sub.a}\includegraphics[height=5cm,width=6cm]{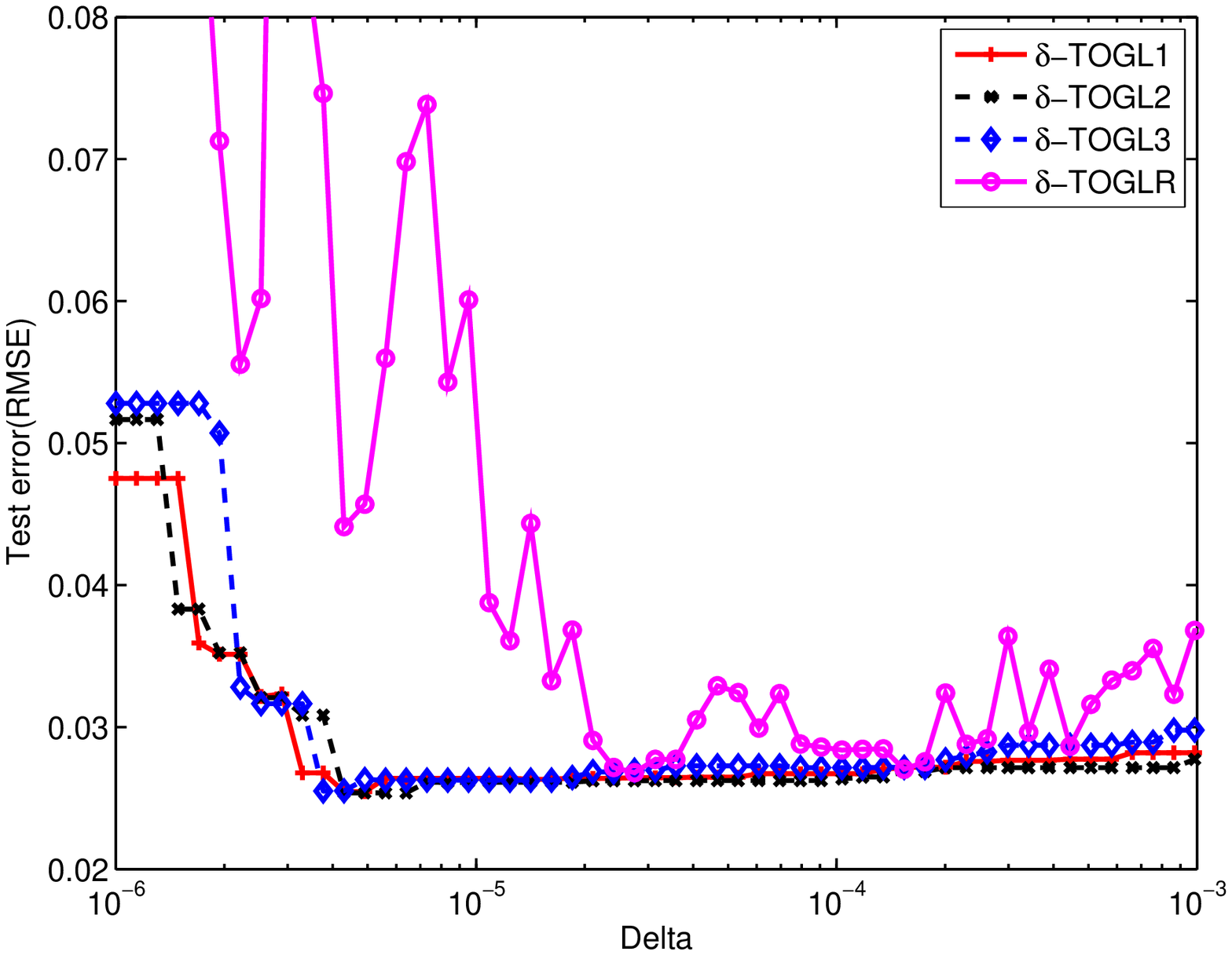}}
\subfigure[]{\label{Fig.sub.a}\includegraphics[height=5cm,width=6cm]{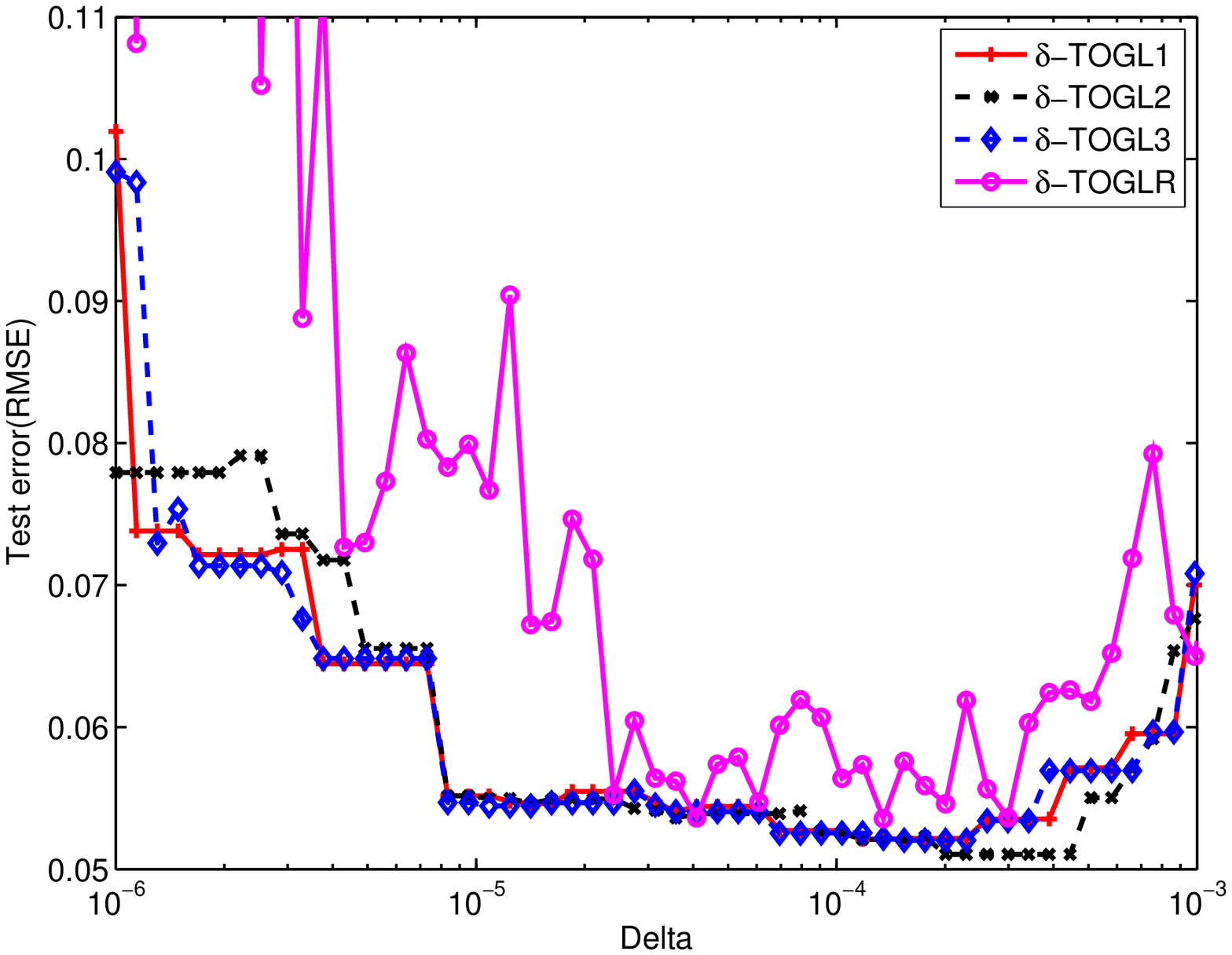}}
\subfigure[]{\label{Fig.sub.a}\includegraphics[height=5cm,width=6cm]{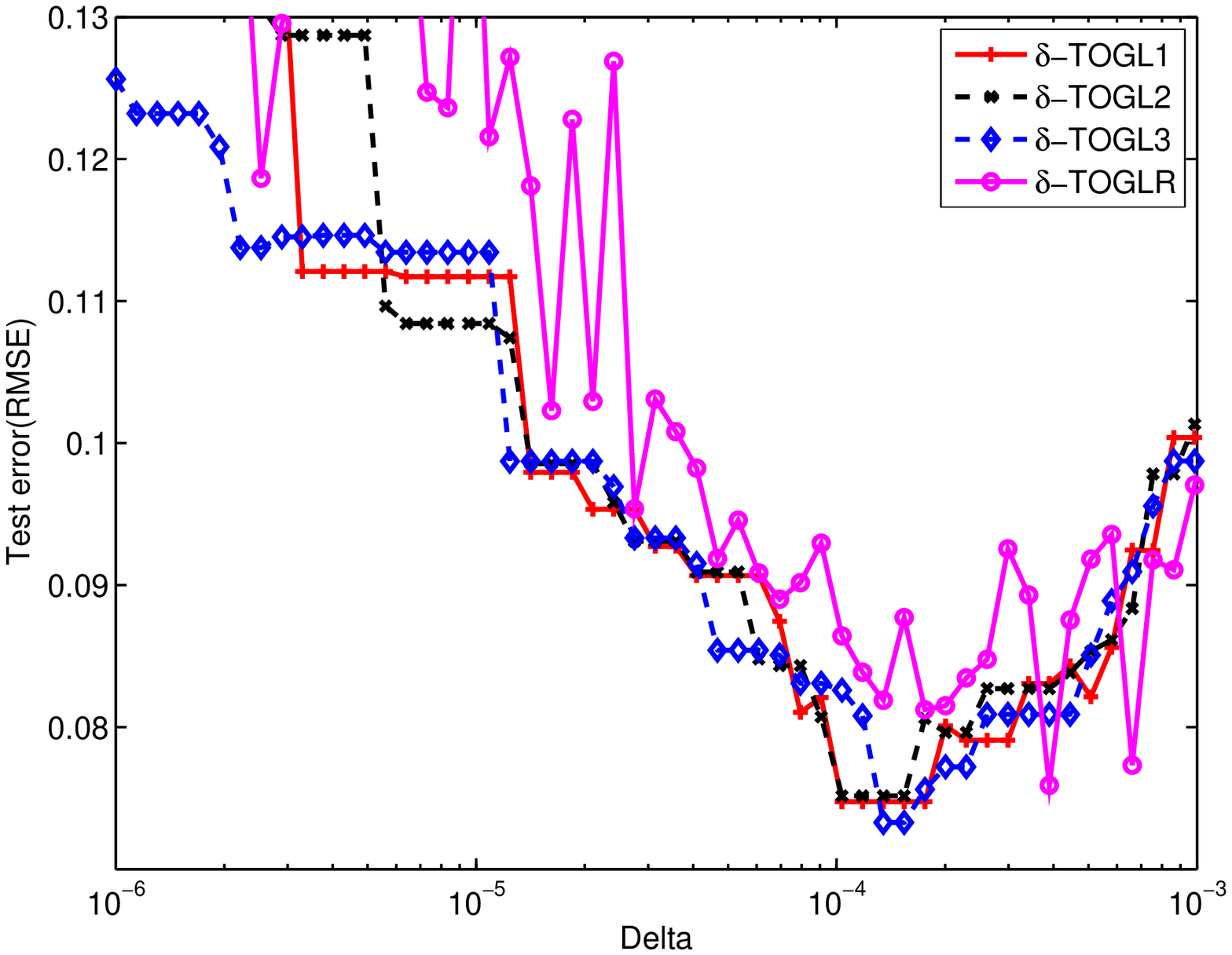}}
\subfigure[]{\label{Fig.sub.a}\includegraphics[height=5cm,width=6cm]{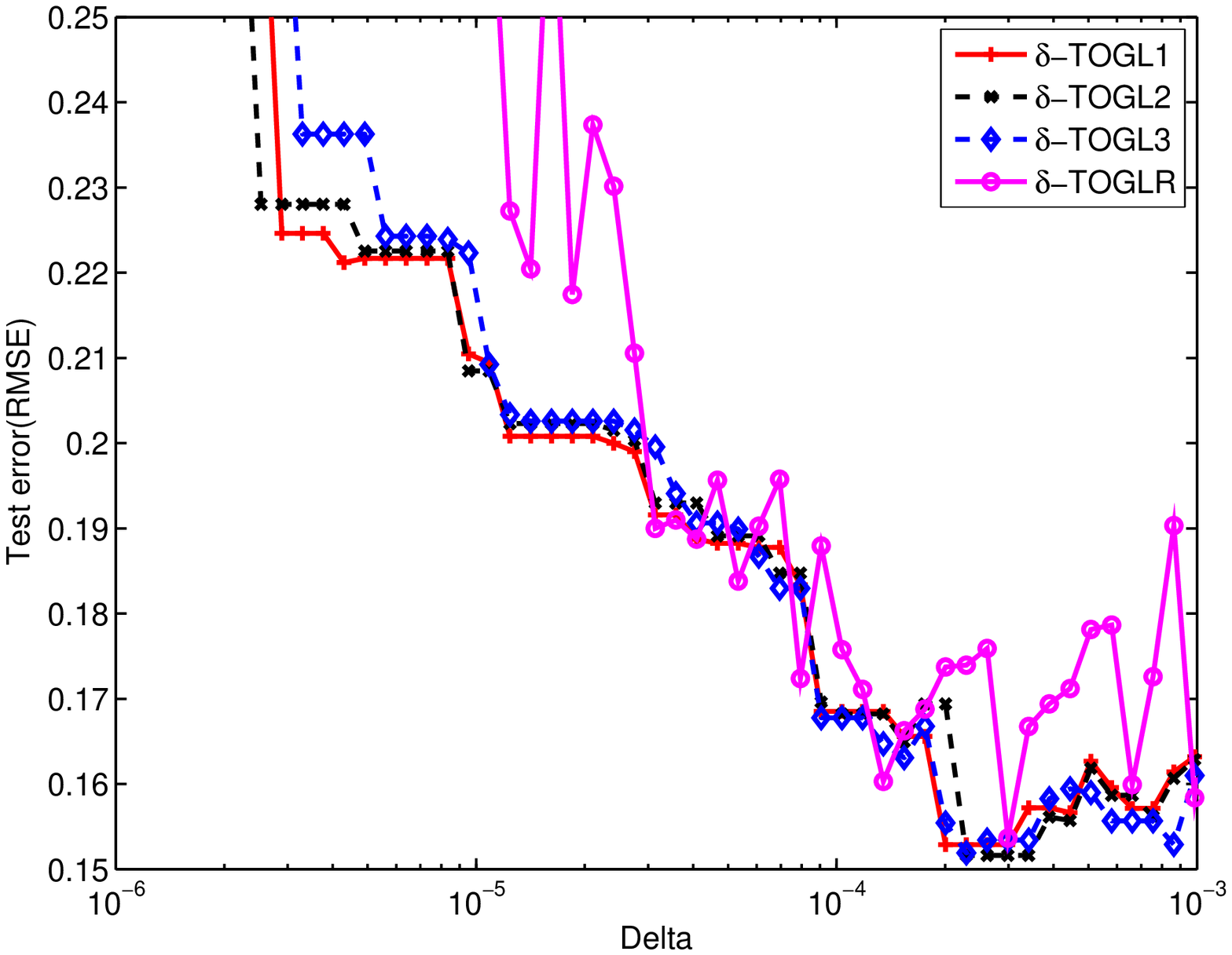}}
\caption{The feasibility of  the $\delta$-TOGL}
\end{figure}


Fig.6 shows that $\delta$-TOGL maintains the feasibility of
``$\delta$-greedy thresholds'' metric after introduced the adaptive
termination rule (IV.2). Therefore, it numerically verifies
Theorem \ref{THEOREM2} and demonstrates that $\delta$-TOGL is
feasible. We also show  the generalization capability of
$\delta$-TOGL  in the following Tab.3.

\begin{table}[H] \renewcommand{\arraystretch}{0.7}
\begin{center}
 \caption{$\delta$-TOGL numerical average results for 5 simulations.}
 \begin{tabular}{|c|c|c|c|}\hline
 $Methods$ & ${\delta}$ and $k$ & $TestRMSE_{\delta-TOGL} $ & ${k_{\delta-TOGL}^*}$       \\ \hline
  \multicolumn{4}{|c|}{$\sigma=0.1$} \\ \hline
 $\delta$-TOGL1    &[\text{4.30e-6,4.91e-6}](11)    &0.0255 &10.6   \\ \hline
 $\delta$-TOGL2    &[\text{5.60e-6,6.40e-6}](10.4)    &0.0254  &10.2  \\ \hline
 $\delta$-TOGL3    &\text{3.76e-6}(11)              &0.0255  &10.6   \\ \hline
 $\delta$-TOGLR    &\text{2.75e-5}(11)              &0.0268 &10.8  \\ \hline

 \multicolumn{4}{|c|}{$\sigma=0.5$} \\ \hline
 $\delta$-TOGL1    &[\text{1.18e-4,1.35e-4}](7.4)      &0.0521  &7.4    \\ \hline
 $\delta$-TOGL2    &[\text{2.01e-4,4.45e-4}](7)      &0.0511 &7   \\ \hline
 $\delta$-TOGL3    &[\text{1.54e-4.2.29e-4}](7.2)      &0.0520  &7.2   \\ \hline
 $\delta$-TOGLR    &\text{1.35e-4}(8.6)                &0.0536  &8.6   \\ \hline

 \multicolumn{4}{|c|}{$\sigma=1$} \\ \hline
 $\delta$-TOGL1    &[\text{1.03e-4,1.76e-4}](7.2)    &0.0747  &6.8     \\ \hline
 $\delta$-TOGL2    &[\text{1.03e-4,1.54e-4}](7.2)    &0.0752  &6.8   \\ \hline
 $\delta$-TOGL3    &[\text{1.35e-4,1.54e-4}](7.2)    &0.0733  &7   \\ \hline
 $\delta$-TOGLR    &\text{3.89e-4}(7.2)              &0.0759  &6.4    \\ \hline

 \multicolumn{4}{|c|}{$\sigma=2$} \\ \hline
 $\delta$-TOGL1    &[\text{2.01e-4,2.99e-4}](6.2)    &0.1529  &5.4   \\ \hline
 $\delta$-TOGL2    &[\text{2.29e-4,3.41e-4}](6.2)    &0.1516  &5.6   \\ \hline
 $\delta$-TOGL3    &\text{2.29e-4}(6.2)             &0.1519  &4.8   \\ \hline
 $\delta$-TOGLR    &\text{2.99e-4}(7.2)              &0.1537  &6.2   \\
 \hline
 \end{tabular}
 \end{center}
 \end{table}

In Tab.3, the second column (i.e., ``${\delta}$ and $k$'')  records
the optimal $\delta$ and the corresponding $k$ (in the bracket)
derived from $\delta$-TOGL, and ${k_{\delta-TOGL}^*}$ denotes the
theoretically optimal $k$ of $\delta$-TOGL. It can be found that for
all types of noise, $k$ is almost the same as ${k_{\delta-TOGL}^*}$.
This shows that the stopping condition concerning $k$ in (IV.1) can
be substituted with the terminal condition (\ref{Our metric2}).
Therefore, these experimental results demonstrate in some extent
that we can avoid the ``overfitting'' by only taking the
``greedy-metric'' issue into account. This can be regarded as the
main novelty of our paper. Furthermore, noting that the optimal test
RMSEs ($TestRMSE_{\delta-TOGL} $) are  comparable with
$TestRMSE_{TOGL} $,  we can declare that  $\delta$-TOGL performs as
well as TOGL, while $\delta$-TOGL successfully omit the parameter
concerning $k$
  in TOGL.

\subsection{The cost of alternating parameter of $\delta$-TOGL}

From OGL to $\delta$-TOGL, the main parameter is changed from $k$ to
$\delta$. In the previous subsections, we pointed out that the
generalization capability of such a change was not degraded.
Furthermore, $\delta$-TOGL provides a more user-friendly parametric
selection strategy. The purpose of this part is to  discuss how the
training time and testing time of $\delta$-TOGL vary with $\delta$.
Since the testing time depends only on the sparsity of the final
estimator, we use the number of iterations to replace the testing
time in this simulation.  In this simulation, we only take the level
of noise as $\sigma=0.1$ and the other experimental setting is
 the same as that of Sec.5.4. The simulation  results are reported in
the following Fig.7.

\begin{figure}[H]\addtolength{\tabcolsep}{-3pt}
\centering
\subfigure[]{\label{Fig.sub.a}\includegraphics[height=5cm,width=6cm]{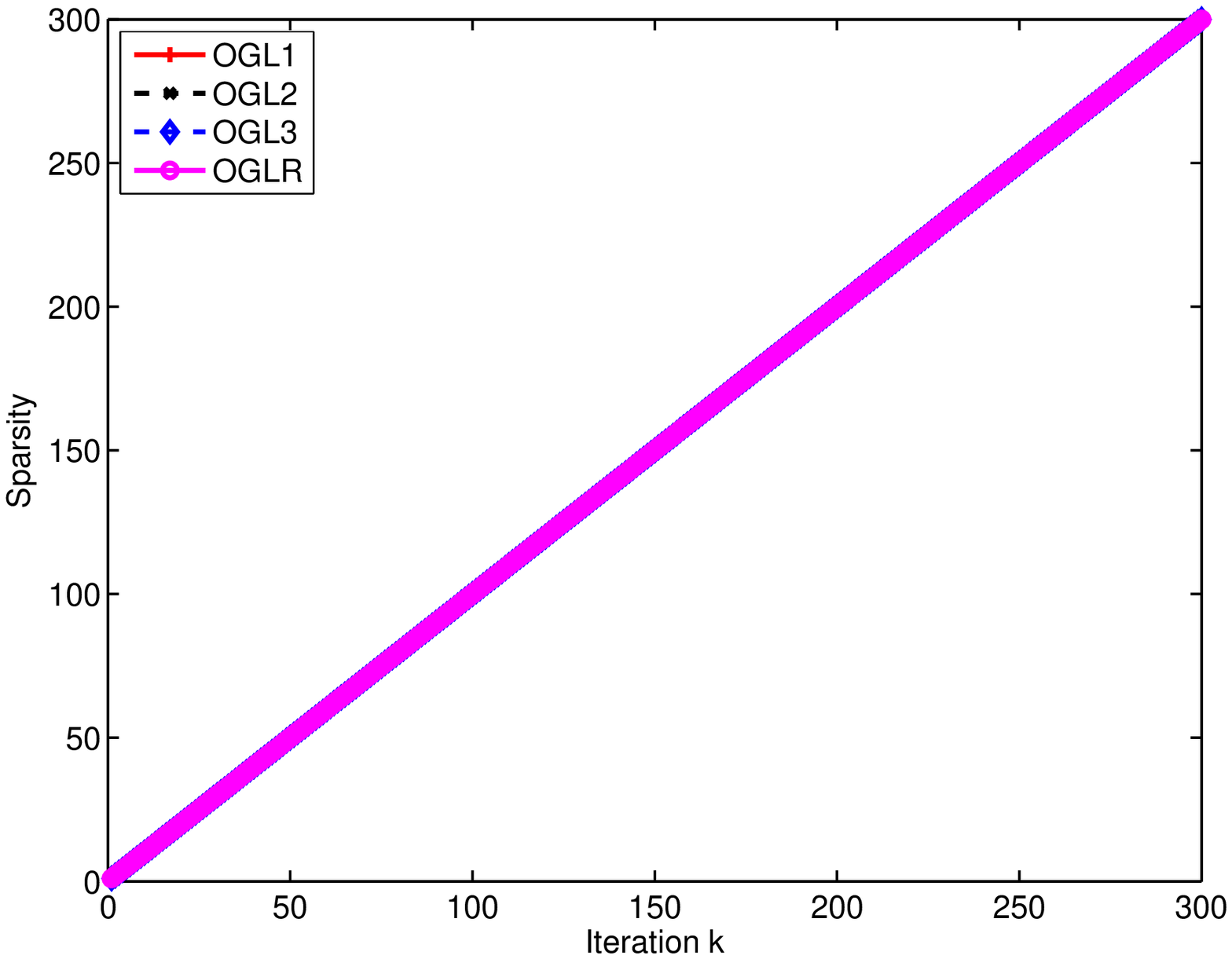}}
\subfigure[]{\label{Fig.sub.b}\includegraphics[height=5cm,width=6cm]{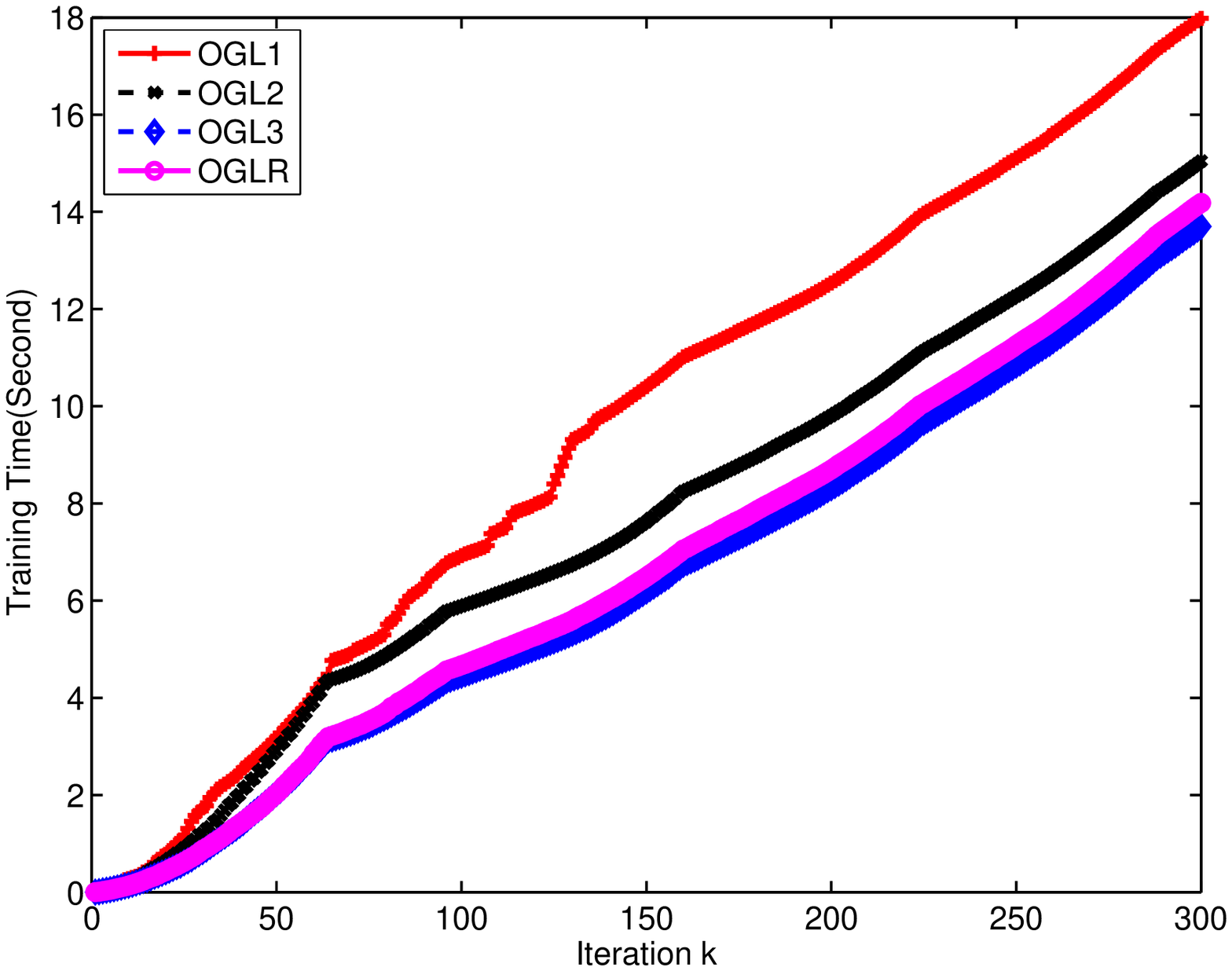}}
\subfigure[]{\label{Fig.sub.a}\includegraphics[height=5cm,width=6cm]{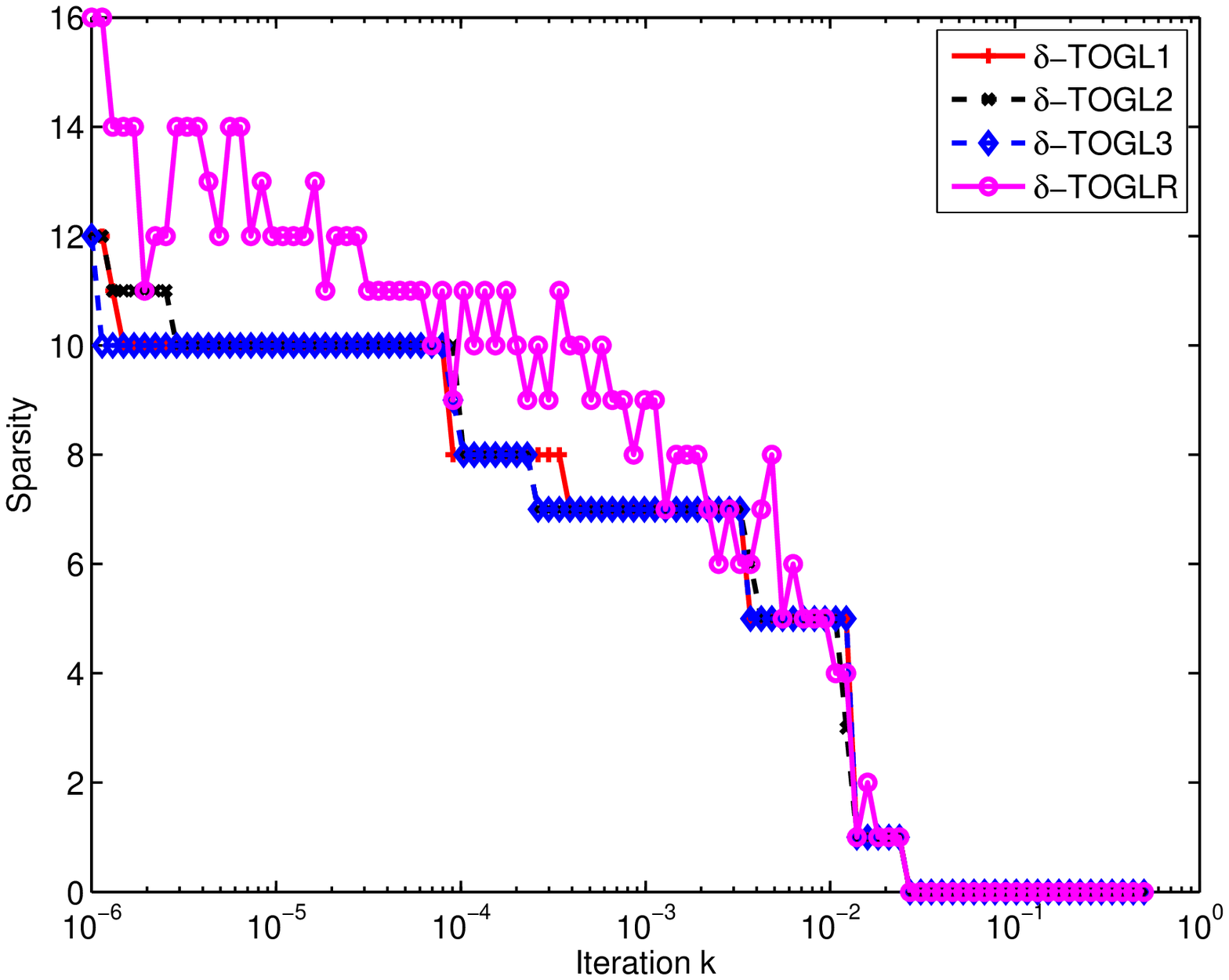}}
\subfigure[]{\label{Fig.sub.b}\includegraphics[height=5cm,width=6cm]{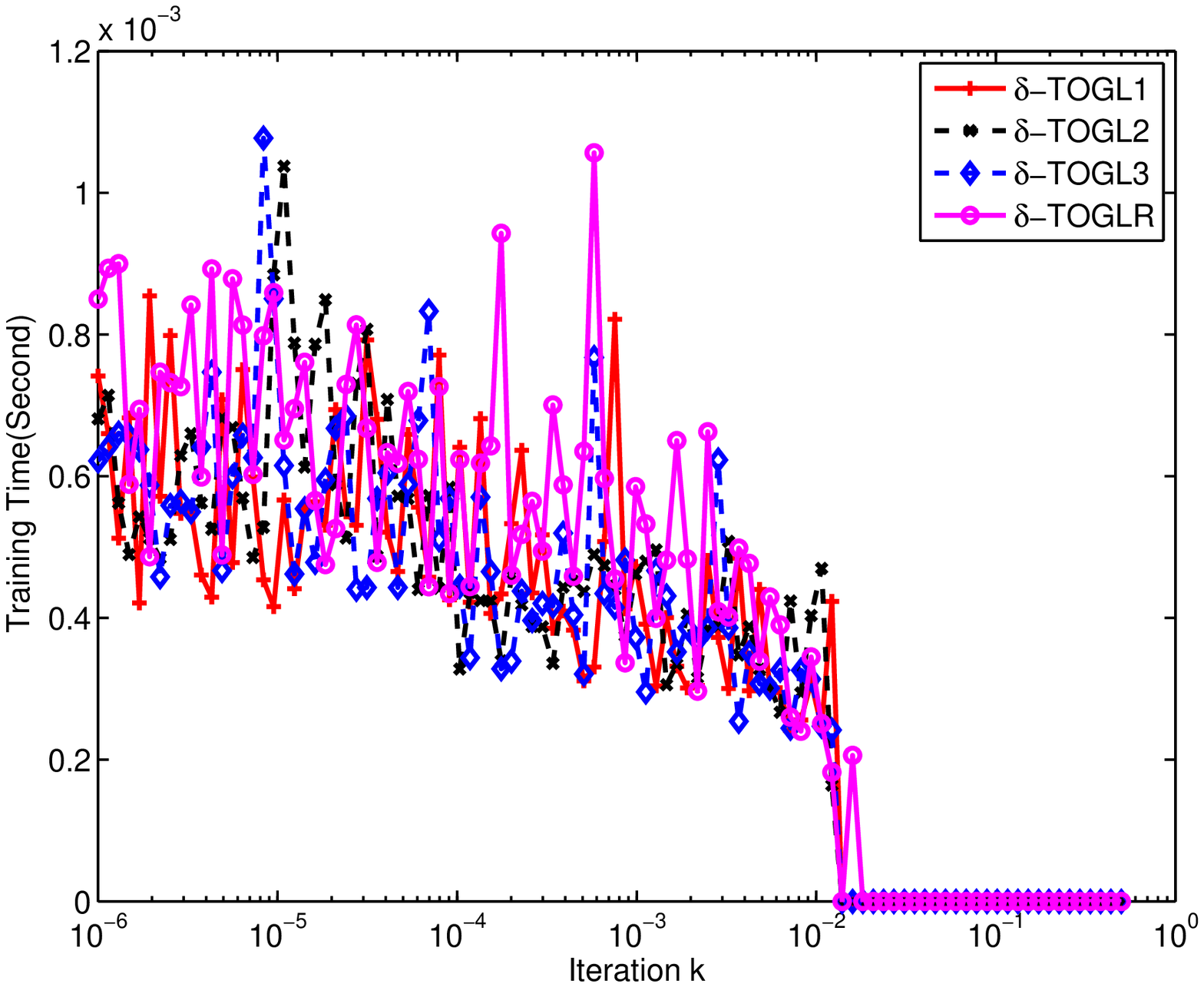}}
\caption{The parameter's influences on training and testing prices
in OGL and $\delta$-TOGL}
\end{figure}

From Fig.7, it shows that the  training and testing costs are  not
expensive  when the parameter $\delta$ tuning in the range
$[10^{-6},0.5]$, where the sparsity no more than 16 and the
corresponding training time is no more than $1.2 \times 10^{-3}$
second. All these show that when the parameter, $k$, of OGL is
transformed as $\delta$ in $\delta$-TOGL, both the training  and
test burdens are not added.

\subsection{ Usability and limitations of $\delta$-TOGL}

In this simulation experiments, we use $\delta$-TOGL1 to learn the
$sinc$ function with sampling noise as $\mathcal N(0,0.1^2)$. The
horizontal axis represents the number of training samples, and the
vertical axis represents the associated target accuracies (which
will be defined as follows). Therefore, every point in the
coordinate system denotes a given learning task. If the test RMSE of
$\delta$-TOGL with $\delta$ selecting by 5-fold cross-validation is
  less than the   accuracy, we define that the
learning task is successful and labeled 1, otherwise, the tasks
fails and tag 0. We run 100 times of trials in each point. The color
from blue to red denotes the values from $0$ to 100. The result is
shown in the following Fig.8.
\begin{figure}[H]
  \centering
  \includegraphics[height=6cm,width=7cm]{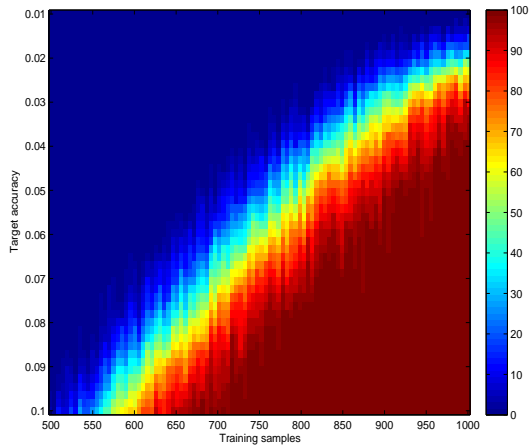}
\caption{Usability and limitations of $\delta$-TOGL}
\end{figure}

In the above Fig.8, the red areas represents that   $\delta$-TOGL
meets the demand of learning task  and  the blue area indicates
failure.   And we can immediately acquire an intuitive enlightenment
from the above phase transition diagram: given a set of data and a
target accuracy for a specific learning task, if you want to use
$\delta$-TOGL  to have a try, then such phase transition diagram can
tell you,  how many samples are approximately needed    to ensure
the accomplishment of your mission within a certain probability.
From the above  experimental result,   the generalization error of
$\delta$-TOGL    performances steadily, gradually inversely
monotonous to   the sample size, which fits   our theoretical
results in Theorem \ref{Theorem2}.

\begin{table}[H]
\begin{center}
 \caption{Compared $\delta$-TOGL performance with other classic algorithms.}\label{table6}
 \begin{tabular}{|c|c|c|c|}\hline
 $Methods$ & $Parameter $& $TestRMSE(standard error)$  &Sparsity   \\ \hline
  \multicolumn{4}{|c|}{$n=300$} \\ \hline
  OGL  &$k=9$&0.0218(0.0034) &9   \\ \hline
 $\delta$-TOGL1   &\text{$\delta=1.00e-4$}&0.0200(0.0044) &7.42   \\ \hline
 $\delta$-TOGL2   &\text{$\delta=2.00e-4$}&0.0203(0.0064) &8   \\ \hline
 $\delta$-TOGL3   &\text{$\delta=1.30e-6$}&0.0284(0.0074) &12.2   \\ \hline
 $\delta$-TOGLR   &\text{$\delta=3.80e-4$}&0.0219(0.0059) &9   \\ \hline
 $\mathcal L_2$(RLS)     &$\lambda=\text{5e-5}$&0.0263(0.0098) &300    \\ \hline
 $\mathcal L_1$(FISTA)   &$\lambda = \text{5e-6}$&0.0298(0.0092) &290.4   \\ \hline
  \multicolumn{4}{|c|}{$n=1000$} \\ \hline
    OGL  &$k=9$&0.0255(0.0045) &9   \\ \hline
 $\delta$-TOGL1    &\text{$\delta=1.00e-4$}&0.0277(0.0072) &7.2  \\ \hline
 $\delta$-TOGL2   &\text{$\delta=6.00e-4$}&0.0294(0.0119) &7   \\ \hline
 $\delta$-TOGL3   &\text{$\delta=6.00e-6$}&0.0211(0.0036) &7.8   \\ \hline
 $\delta$-TOGLR   &\text{$\delta=1.00e-4$}&0.0284(0.0082) &10.4   \\ \hline
 $\mathcal L_2$(RLS)  &$\lambda=0.0037$& 0.0272(0.0103)  &1000   \\ \hline
$\mathcal L_1$(FISTA)    &$\lambda=\text{7e-6}$&0.0277(0.0094)
&931.8  \\ \hline
  \multicolumn{4}{|c|}{$n=2000$} \\ \hline
    OGL   &$k=9$&0.0250(0.0054) &9   \\ \hline
 $\delta$-TOGL1    &\text{$\delta=2.00e-4$}&0.0256(0.0078)  &7.14  \\ \hline
 $\delta$-TOGL2   &\text{$\delta=1.00e-4$}&0.0280(0.0089) &8.6   \\ \hline
 $\delta$-TOGL3   &\text{$\delta=2.00e-6$}&0.0222(0.0082) &7.6   \\ \hline
 $\delta$-TOGLR   &\text{$\delta=9.06e-5$}&0.0266(0.0079) &10.6   \\ \hline
 $\mathcal L_2$(RLS)   &$\lambda= 0.0005$&0.0256(0.0126) &2000  \\ \hline
$\mathcal L_1$(FISTA)    &$\lambda=\text{7e-6}$&0.0235(0.0079) &1772
\\ \hline
\end{tabular}
 \end{center}
 \end{table}
\subsection{ $\delta$-TOGL is competitive}

In this part, we compare $\delta$-TOGL with some classical
dictionary-based learning schemes such as the classical OGL,  ridge
and lasso estimators.   The regularization parameters of both ridge
and lasso estimators, the iteration number of OGL and the threshold,
$\delta$, of $\delta$-TOGL are drawn by using 5-fold
cross-validation. The regression is the $sinc$ function  with
sampling noise as the standard Gaussian noise with the variance
$0.1$, i.e., $\mathcal N(0,0.1^2)$.
 The simulation result can be seen in   Tab.4.

From Tab.4, we can see that  under the same order of generalization
performance  magnitude, the number of selected atoms of greedy-type
strategy is far smaller than the regularization    algorithms. This
explains why greedy-type algorithms are more suitable for redundant
dictionary learning \cite{Barron2008}. Furthermore, it also can be
found in Tab.4 that the generalization capability of all the
aforementioned learning schemes are similar. At last, our simulation
results shows that the size of dictionary doesn't affect the
learning performance of $\delta$-TOGL schemes very much, provided it
attains the lowest requirement to finishes the learning task.
 All these reveals that
$\delta$-OGL is a competitive learning scheme.

 \section{Proofs}

Since Theorem \ref{THEOREM1} can be regarded as a special case of
Theorem \ref{THEOREM2}, we only prove Theorem \ref{THEOREM2} in this
section. The methodology of proof is the same as that of
\cite{Lin2013a} and the main tool is borrowed from
\cite{Temlaykov2008a}.

 In order to give an error decomposition strategy for
$\mathcal E(f_{\bf z}^k)-\mathcal E(f_\rho)$, we  need to construct
a function $f_k^*\in \mbox{span}(D_n)$ as follows. Since $f_\rho\in
\mathcal L_1^r$, there exists a $h_\rho:=\sum_{i=1}^na_ig_i\in
\mbox{Span}(\mathcal D_n)$ such that
\begin{equation}\label{h}
                \|h_\rho\|_{\mathcal L_1}\leq\mathcal B,\ \mbox{and}\ \|f_\rho-h_\rho\|\leq \mathcal B n^{-r}.
\end{equation}
Define
\begin{equation}\label{f*}
               f_0^*=0,\  f_k^*=\left(1-\frac1k\right)f^*_{k-1}+\frac{\sum_{i=1}^n|a_i|\|g_i\|_\rho}{k}g^*_k,
\end{equation}
 where
$$
              g_k^*:=\arg\max\limits_{g\in \mathcal D_n'}\left\langle
            h_\rho-\left(1-\frac1k\right)f_{k-1}^*,g\right\rangle_{\rho},
$$
and
$$
            \mathcal D_n':=\left\{{g_i(x)}/{\|g_i\|_\rho}\right\}_{i=1}^n
             \bigcup
                 \left\{-{g_i(x)}/{\|g_i\|_\rho}\right\}_{i=1}^n
$$
with $g_i\in \mathcal D_n$.

Let $f_{\bf z}^\delta$  and $f_k^*$ be defined as in Algorithm 1 and
(\ref{f*}), respectively,  then we have
\begin{eqnarray*}
         &&\mathcal E(\pi_Mf_{\bf z}^\delta)-\mathcal E(f_\rho)\\
         &\leq&
         \mathcal E(f_k^*)-\mathcal E(f_\rho)
         +
         \mathcal E_{\bf z}(\pi_Mf_{\bf z}^\delta)-\mathcal E_{\bf z}(f_k^*)\\
         &+&
         \mathcal
        E_{\bf z}(f_k^*)-\mathcal E(f_k^*)+\mathcal E(\pi_Mf_{\bf z}^\delta)-\mathcal
        E_{\bf z}(f_{\bf z}^k),
\end{eqnarray*}
where $\mathcal E_{\bf
               z}(f)=\frac1m\sum_{i=1}^m(y_i-f(x_i))^2$.

Upon making the short hand notations
$$
             \mathcal D(k):=\mathcal E(f_k^*)-\mathcal E(f_\rho),
$$
 $$
          \mathcal S({\bf z},k,\delta):=\mathcal
           E_{\bf z}(f_k^*)-\mathcal E(f_k^*)+\mathcal E(\pi_Mf_{\bf z}^\delta)-\mathcal
           E_{\bf z}(\pi_Mf_{\bf z}^\delta),
$$
and
$$
            \mathcal P({\bf z},k,\delta):=\mathcal E_{\bf z}(\pi_Mf_{\bf z}^\delta)-\mathcal E_{\bf
            z}(f_k^*)
$$
respectively for the approximation error, the sample error and the
hypothesis error, we have
\begin{equation}\label{error decomposition}
            \mathcal E(\pi_Mf_{\bf z}^\delta)-\mathcal E(f_\rho)=\mathcal
            D(k)+ \mathcal S({\bf z},k,\delta)+\mathcal P({\bf z},k,\delta).
\end{equation}

At first, we give an upper bound estimate for $\mathcal D(k)$, which
can be found in Proposition 1 of \cite{Lin2013a}.

\begin{lemma}\label{LEMMA1}
 Let $f_k^*$ be defined in (\ref{f*}). If
$f_\rho\in \mathcal L_1^r$, then
\begin{equation}\label{approximation error estimation}
             \mathcal D(k)\leq  \mathcal B^2(k^{-1/2}+n^{-r})^2.
\end{equation}
\end{lemma}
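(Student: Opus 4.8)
The plan is to recognise the auxiliary sequence $f_k^*$ from (\ref{f*}) as a \emph{relaxed greedy} approximation, in the metric $\|\cdot\|_\rho$, of the surrogate target $h_\rho$ rather than of $f_\rho$ itself, and then to invoke the classical $O(k^{-1/2})$ convergence rate of such iterations. First I would use the identity (\ref{equality}) to write $\mathcal D(k)=\mathcal E(f_k^*)-\mathcal E(f_\rho)=\|f_k^*-f_\rho\|_\rho^2$, and split off the surrogate by the triangle inequality $\|f_k^*-f_\rho\|_\rho\le\|f_k^*-h_\rho\|_\rho+\|h_\rho-f_\rho\|_\rho$. The second term is handled at once by (\ref{h}): since $\rho_X$ is a probability measure, $\|g\|_\rho\le\|g\|$ for every $g\in C(X)$, so $\|h_\rho-f_\rho\|_\rho\le\|h_\rho-f_\rho\|\le\mathcal B n^{-r}$. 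The whole lemma then reduces to the single estimate $\|f_k^*-h_\rho\|_\rho\le\mathcal B k^{-1/2}$, after which squaring the sum yields exactly $\mathcal B^2(k^{-1/2}+n^{-r})^2$.

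The heart of the argument is a one-step recursion for $e_k:=\|h_\rho-f_k^*\|_\rho^2$. Writing $B_\rho:=\sum_{i=1}^n|a_i|\,\|g_i\|_\rho$ and $R_k:=h_\rho-(1-\tfrac1k)f_{k-1}^*$, the update (\ref{f*}) reads $h_\rho-f_k^*=R_k-(B_\rho/k)g_k^*$, so that, because $g_k^*\in\mathcal D_n'$ has unit $\rho$-norm,
$$e_k=\|R_k\|_\rho^2-\frac{2B_\rho}{k}\langle R_k,g_k^*\rangle_\rho+\frac{B_\rho^2}{k^2}.$$
The decisive observation is that $h_\rho/B_\rho$ lies in the convex hull of $\mathcal D_n'$: the representation $h_\rho=\sum_i|a_i|\,\|g_i\|_\rho\cdot\mathrm{sign}(a_i)\,\frac{g_i}{\|g_i\|_\rho}$ exhibits it as a convex combination of elements of $\mathcal D_n'$ of total mass $B_\rho$. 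Since $g_k^*$ maximises $\langle R_k,\cdot\rangle_\rho$ over $\mathcal D_n'$, this gives $\langle R_k,g_k^*\rangle_\rho\ge\langle R_k,h_\rho\rangle_\rho/B_\rho$. Substituting this bound and expanding $R_k=(1-\tfrac1k)(h_\rho-f_{k-1}^*)+\tfrac1k h_\rho$, the cross terms cancel and the $\|h_\rho\|_\rho^2$ contribution is nonpositive, leaving the clean recursion $e_k\le(1-\tfrac1k)^2 e_{k-1}+B_\rho^2/k^2$.

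Finally I would solve this recursion by induction to obtain $e_k\le B_\rho^2/k$. The base case $k=1$ follows from the same convex-hull/greedy inequality, which forces $e_1\le B_\rho^2-\|h_\rho\|_\rho^2\le B_\rho^2$, and the inductive step is the algebraic identity $(1-\tfrac1k)^2\cdot\frac{B_\rho^2}{k-1}+\frac{B_\rho^2}{k^2}=\frac{B_\rho^2}{k}$. Since each $g_i\in\mathcal D_n$ satisfies $\|g_i\|_\rho\le1$, we have $B_\rho\le\sum_i|a_i|\le\|h_\rho\|_{\mathcal L_1}\le\mathcal B$, whence $\|f_k^*-h_\rho\|_\rho\le\mathcal B k^{-1/2}$, which closes the proof. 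I expect the only genuinely delicate point to be the convex-hull representation that licenses the greedy lower bound $\langle R_k,g_k^*\rangle_\rho\ge\langle R_k,h_\rho\rangle_\rho/B_\rho$; everything downstream is a mechanical recursion. This is precisely the relaxed-greedy estimate recorded in Proposition~1 of \cite{Lin2013a}.
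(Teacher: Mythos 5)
Your proof is correct, and it reconstructs precisely the argument that the paper itself omits: the paper's entire ``proof'' of Lemma~\ref{LEMMA1} is a citation to Proposition~1 of \cite{Lin2013a}, so there is nothing in the text to diverge from. Your route --- reduce to the surrogate $h_\rho$ by the triangle inequality, control $\|f_\rho-h_\rho\|_\rho\le\|f_\rho-h_\rho\|\le\mathcal B n^{-r}$ via (\ref{h}), and prove $\|h_\rho-f_k^*\|_\rho^2\le B_\rho^2/k$ by the convex-hull/greedy inequality $\langle R_k,g_k^*\rangle_\rho\ge\langle R_k,h_\rho\rangle_\rho/B_\rho$ followed by the recursion $e_k\le(1-\tfrac1k)^2e_{k-1}+B_\rho^2/k^2$ and induction --- is exactly the standard relaxed-greedy rate argument underlying the cited proposition. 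I verified the algebra: the cross terms do cancel, the $\|h_\rho\|_\rho^2$ contribution enters with coefficient $-1/k^2\le 0$, and the identity $(1-\tfrac1k)^2\tfrac{1}{k-1}+\tfrac{1}{k^2}=\tfrac1k$ closes the induction. What your version buys is self-containedness; what the paper's citation buys is brevity.

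Two bookkeeping points, both traceable to the paper's own loose hypotheses rather than to your reasoning, should be repaired. First, your chain $B_\rho\le\sum_i|a_i|\le\|h_\rho\|_{\mathcal L_1}\le\mathcal B$ has the middle inequality backwards: $\|h_\rho\|_{\mathcal L_1}$ is an \emph{infimum} over representations, so it is $\le\sum_i|a_i|$ for any particular representation. The fix is to choose the representation $h_\rho=\sum_ia_ig_i$ in (\ref{h}) to be a minimal one (the infimum is attained since $\mathcal D_n$ is finite), so that $\sum_i|a_i|\le\mathcal B$; this choice is implicitly required by the paper's own definition (\ref{f*}), which uses these same coefficients. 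Second, the step $B_\rho=\sum_i|a_i|\|g_i\|_\rho\le\sum_i|a_i|$ needs $\|g_i\|_\rho\le1$, which does not follow from the paper's stated learning assumption $\|g_i\|_m\le1$ in (I); it does follow from the normalization $\|g\|_H\le1$ of Section 2.2 with $H=L^2_{\rho_X}$, or from the uniform-norm normalization used in \cite{Barron2008} and \cite{Lin2013a}, which is clearly the intended hypothesis given that (\ref{prior}) is stated in the uniform norm. With these two repairs made explicit, your argument is complete.
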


To bound the sample and hypothesis errors, we need the following
Lemma \ref{LEMMA2}.

\begin{lemma}\label{LEMMA2}
Let $y(x)$ satisfy $y(x_i)=y_i$,  and $f_{\bf z}^\delta$ be  defined
  in Algorithm 1. Then, there are at most
\begin{equation}\label{Estimate k}
    C\delta^{-2}\log\frac1\delta
\end{equation}
bases selected to build up the estimator $f_{\bf z}^\delta$.
Furthermore, for any $h \in \mbox{Span}\{D_n\}$, we have
\begin{equation}\label{estimate hypothesis error}
           \|y - f_{\bf z}^\delta\|_m^2\leq2\|y - h\|_m^2+
       2\delta^2\|h\|_{\mathcal L_1(\mathcal D_n)}.
\end{equation}
\end{lemma}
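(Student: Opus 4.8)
The plan is to prove the two assertions separately, both exploiting the $\langle\cdot,\cdot\rangle_m$-orthogonality of the projected residual together with the $\delta$-threshold condition (\ref{our metric}).

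\textbf{Bounding the number of selected atoms.} First I would establish a geometric decay of the empirical residual. Writing $r_k = y - f_{\bf z}^{k,\delta}$, the vector $r_k$ is $\|\cdot\|_m$-orthogonal to $V_{{\bf z},k}=\mathrm{span}\{g_1,\dots,g_k\}$ because $f_{\bf z}^{k,\delta}$ is the orthogonal projection of $y$ onto $V_{{\bf z},k}$. Let $g_k^\perp := g_k - P_{V_{{\bf z},k-1}}g_k$ denote the part of the newly selected atom orthogonal to $V_{{\bf z},k-1}$; then $\|g_k^\perp\|_m \le \|g_k\|_m \le 1$, and since $r_{k-1}\perp V_{{\bf z},k-1}$ one has $\langle r_{k-1},g_k^\perp\rangle_m = \langle r_{k-1},g_k\rangle_m$. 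Projecting onto the enlarged space decreases the squared residual by the squared coordinate along $g_k^\perp/\|g_k^\perp\|_m$, so
$$
\|r_{k-1}\|_m^2 - \|r_k\|_m^2 = \frac{|\langle r_{k-1},g_k\rangle_m|^2}{\|g_k^\perp\|_m^2} \ge |\langle r_{k-1},g_k\rangle_m|^2 \ge \delta^2\|r_{k-1}\|_m^2,
$$
the last inequality being exactly (\ref{our metric}). Hence $\|r_k\|_m^2 \le (1-\delta^2)\|r_{k-1}\|_m^2$ and, inductively, $\|r_k\|_m^2 \le (1-\delta^2)^k\|y\|_m^2$. As long as the algorithm has not terminated, clause (\ref{Our metric2}) fails, i.e. $\|r_k\|_m > \delta\|y\|_m$; combining this with the decay forces $(1-\delta^2)^k > \delta^2$, and the elementary bound $\log(1-\delta^2)\le -\delta^2$ then gives $k < 2\delta^{-2}\log(1/\delta)$, which is (\ref{Estimate k}).

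\textbf{Bounding the hypothesis error.} Let $K$ be the terminal index and $r_K = y - f_{\bf z}^\delta$, and fix an arbitrary $h=\sum_{g\in\mathcal D_n}a_g g$ whose coefficients realize (up to an arbitrarily small slack) the norm $\|h\|_{\mathcal L_1(\mathcal D_n)}$. I would split according to which clause of (IV.2) stops the iteration. If termination is caused by the absence of an active atom, then $|\langle r_K,g\rangle_m|\le \delta\|r_K\|_m$ for every $g\in\mathcal D_n$, so by linearity $|\langle r_K,h\rangle_m|\le \delta\|r_K\|_m\,\|h\|_{\mathcal L_1(\mathcal D_n)}$. Using $r_K\perp f_{\bf z}^\delta$ to write $\|r_K\|_m^2=\langle r_K,y\rangle_m = \langle r_K,y-h\rangle_m + \langle r_K,h\rangle_m$, Cauchy--Schwarz and the previous bound yield $\|r_K\|_m \le \|y-h\|_m + \delta\|h\|_{\mathcal L_1(\mathcal D_n)}$. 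If instead (\ref{Our metric2}) triggers, then $\|r_K\|_m\le\delta\|y\|_m$, and the triangle inequality $\|y\|_m\le\|y-h\|_m+\|h\|_m\le\|y-h\|_m+\|h\|_{\mathcal L_1(\mathcal D_n)}$ (using $\|g\|_m\le1$) gives the same estimate after multiplying by $\delta\le 1$. Squaring and applying $(a+b)^2\le 2a^2+2b^2$ then produces (\ref{estimate hypothesis error}). (I note that this argument naturally yields $2\delta^2\|h\|_{\mathcal L_1(\mathcal D_n)}^2$; the first-power form in (\ref{estimate hypothesis error}) should be read with the square, which is harmless downstream since the bound is applied with $\|h\|_{\mathcal L_1(\mathcal D_n)}\le\mathcal B$.)

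\textbf{Main obstacle.} The crux is the per-step decay estimate: showing that the reduction of $\|r_{k-1}\|_m^2$ is at least $|\langle r_{k-1},g_k\rangle_m|^2$, which hinges simultaneously on the orthogonality identity $\langle r_{k-1},g_k^\perp\rangle_m=\langle r_{k-1},g_k\rangle_m$ and on the normalization $\|g_k^\perp\|_m\le1$ (the latter being where the dictionary bound $\|g\|_m\le1$ enters). A secondary delicate point is that the hypothesis-error bound must hold uniformly across \emph{both} stopping clauses of (IV.2); clause (\ref{Our metric2}) demands the separate $\|y\|_m$-argument, while the ``no active atom'' clause is precisely what supplies the near-orthogonality of $r_K$ to the whole dictionary that activates the $\mathcal L_1$-duality estimate.
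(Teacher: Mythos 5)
Your proposal is correct, and the two halves of it relate to the paper's proof differently. For the bound (\ref{estimate hypothesis error}) you follow essentially the same route as the paper: split on the two stopping clauses of (IV.2); in the ``no active atom'' case use $\langle r_K, f_{\bf z}^\delta\rangle_m=0$, Cauchy--Schwarz, and the $\mathcal L_1$-duality bound $|\langle r_K,h\rangle_m|\leq \|h\|_{\mathcal L_1(\mathcal D_n)}\max_{g}|\langle r_K,g\rangle_m|\leq\delta\|r_K\|_m\|h\|_{\mathcal L_1(\mathcal D_n)}$ to get $\|r_K\|_m\leq\|y-h\|_m+\delta\|h\|_{\mathcal L_1(\mathcal D_n)}$; in the case where (\ref{Our metric2}) triggers, use the triangle inequality on $\delta\|y\|_m$ (your version is in fact cleaner than the paper's, whose corresponding display contains typos). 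You are also right that squaring yields $2\delta^2\|h\|_{\mathcal L_1(\mathcal D_n)}^2$ rather than the first power stated in (\ref{estimate hypothesis error}); this is a typo in the lemma, harmless downstream since it is invoked with $\|f_k^*\|_{\mathcal L_1(\mathcal D_n)}\leq\mathcal B$. Where you genuinely diverge is the counting bound (\ref{Estimate k}): the paper simply cites Theorem 4.1 of Temlyakov's ``Relaxation in greedy approximation'' and gives no argument, whereas you prove it directly via the one-step identity $\|r_k\|_m^2=\|r_{k-1}\|_m^2-|\langle r_{k-1},g_k\rangle_m|^2/\|g_k^\perp\|_m^2$, the normalization $\|g_k^\perp\|_m\leq\|g_k\|_m\leq 1$, and the threshold (\ref{our metric}), giving geometric decay $\|r_k\|_m^2\leq(1-\delta^2)^k\|y\|_m^2$, which collides with the not-yet-terminated condition $\|r_k\|_m>\delta\|y\|_m$ after at most $2\delta^{-2}\log\frac{1}{\delta}$ steps. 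This self-contained argument is sound (the selected atom cannot lie in $V_{{\bf z},k-1}$, since otherwise $\langle r_{k-1},g_k\rangle_m=0$ would violate the threshold), it makes the constant explicit ($C=2$), and it makes transparent that the iteration bound is driven by the stopping clause (\ref{Our metric2}) rather than by an external black box --- a genuine expository improvement over the paper's citation.
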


\begin{proof}
(\ref{Estimate k}) can be found in \cite[Theorem
4.1]{Temlaykov2008a}. Now we turn to prove (\ref{estimate hypothesis
error}). Our stopping criterion guarantees that either $\max_{g\in
\mathcal D_n}|\langle r_{k},g\rangle_m|\leq\delta\|r_k\|_{m}$ or
$\|r_k\|\leq\delta\|y\|_m.$ In the latter case the required bound
follows form
$$
                \|y\|_m\leq\|y-h\|_m+\|h\|_m\leq\delta(\|y-h\|_m+\|h\|_m)
                \leq\delta(\|f-h\|_m+\|h\|_{\mathcal L_1(\mathcal D_n)}).
$$
Thus, we assume  $\max_{g\in \mathcal D_n}|\langle
r_{k},g\rangle_m|\leq\delta\|r_k\|_{m}$ holds. By using
$$
       \langle y-f_k,f_k\rangle_m=0,
$$
we have
\begin{eqnarray*}
     \|r_k\|_m^2
     &=&
     \langle r_k,r_k\rangle_m=\langle
     r_k,y-h\rangle_m+\langle r_k,h\rangle_m
     \leq
     \|y-h\|_m\|r_k\|_m+\langle r_k,h\rangle_m\\
     &\leq&
     \|y-h\|_m\|r_k\|_m+\|h\|_{\mathcal L_1(\mathcal D_n)}\max_{g\in \mathcal D_n}\langle
     r_k,g\rangle_m
     \leq
       \|y-h\|_m\|r_k\|_m+\|h\|_{\mathcal L_1(\mathcal D_n)}\delta\|r_k\|_m.
\end{eqnarray*}
This finishes the proof.
\end{proof}

Based on Lemma \ref{LEMMA2} and the fact $\|f^*_k\|_{\mathcal
L_1(\mathcal D_n)}\leq \mathcal B$ \cite[Lemma 1]{Lin2013a}, we
obtain
\begin{equation}\label{hypothesis error estimation}
            \mathcal P({\bf z},k,\delta)\leq 2\mathcal E_{\bf z}(\pi_Mf_{\bf z}^\delta)-\mathcal E_{\bf
            z}(f_k^*)\leq 2\mathcal B\delta^2.
\end{equation}

Now, we turn to    bound the sample error $\mathcal S({\bf z},k)$.
Upon using the short hand notations
$$
               S_1({\bf z},k):=\{\mathcal E_{\bf
               z}(f_k^*)-\mathcal E_{\bf
               z}(f_\rho)\}-\{\mathcal E(f_k^*)-\mathcal
               E(f_\rho)\}
$$
and
$$
               S_2({\bf z},\delta):=\{\mathcal E(\pi_Mf_{\bf z}^\delta)-\mathcal E(f_\rho)\}-\{\mathcal E_{\bf
               z}(\pi_Mf_{\bf z}^\delta)-\mathcal E_{\bf z}(f_\rho)\},
$$
we write
\begin{equation}\label{sample decomposition}
            \mathcal S({\bf z},k)=\mathcal S_1({\bf z},k)+\mathcal
            S_2({\bf z},\delta).
\end{equation}
It can be found in Proposition 2 of \cite{Lin2013a} that
 for any $0<t<1$, with confidence
$1-\frac{t}2$,
\begin{equation}\label{S1 estimate}
              \mathcal S_1({\bf z},k)\leq \frac{7(3M+\mathcal B\log\frac2t)}{3m}+\frac12\mathcal D(k)
\end{equation}

Using \cite[Eqs(A.10)]{Xu2014} with $k$ replaced by
$C\delta^{-2}\log\frac1\delta$, we have
\begin{equation}\label{S2 estimate}
      \mathcal S_2({\bf z},\delta)\leq \frac12\mathcal E(\pi_Mf_{\bf
      z}^\delta)-\mathcal
      E(f_\rho)+\log\frac2t\frac{C\delta^{-2}\log\frac1\delta\log
      m}{m}
\end{equation}
holds with confidence at least $1-t/2$. Therefore, (\ref{error
decomposition}), (\ref{approximation error estimation}),
(\ref{hypothesis error estimation}), (\ref{S1 estimate}), (\ref{S2
estimate}) and (\ref{sample decomposition}) yields that
$$
 \mathcal E(\pi_Mf_{\bf z}^\delta)-\mathcal E(f_\rho)
             \leq
             C\mathcal B^2( (m\delta^2)^{-1}\log m\log \frac{1}{\delta }\log\frac2t+\delta^2+n^{-2r})
$$
holds with confidence at least $1-t$. This finishes the proof of
Theorem \ref{THEOREM2}.

\section{Concluding Remarks }

The main contributions of the present paper can be concluded into
four folds. Firstly, we propose that the steepest gradient descent
(SGD) is not the unique choice  to select a new atom from dictionary
in orthogonal greedy algorithm (OGL), which    disrupts habitual
thinking to make a way for searching new greedy metric for OGL. To
the best of our knowledge, this is the first work on the
``greedy-metric'' issue for greedy learning. Secondly, we succeed in
finding an appropriate greedy metric in OGL and theoretically and
numerically verify its rationality and feasibility. Motivated by a
series work  of Temlyakov and his co-authors \cite{Liu2012},
\cite{Temlaykov2000,Temlaykov2003,Temlaykov2008,Temlaykov2008a},
 we propose a $\delta$-greedy thresholds to
measure the level of greed in orthogonal greedy learning. Our
theoretical result shows that orthogonal greedy learning with such a
greedy metric  yields a learning rate as $ m^{-1/2} (\log m)^2$,
which is almost the same as that of the classical SGD-based OGL
\cite{Barron2008}. Thirdly, based on the selected greedy metric, we
derive an adaptive terminal rule for the corresponding OGL and thus
provide a complete learning system called $\delta$-thresholding
orthogonal greedy learning ($\delta$-TOGL). Lastly, we study the
learning performance of $\delta$-TOGL in terms of both theoretical
analysis and numerical verification. Our study implies that
$\delta$-TOGL is a competitive learning scheme as the widely used
strategies such as the classical orthogonal greedy learning, ridge
estimate and lasso estimate. The main results show that when applied
to supervised learning problems, $\delta$-TOGL outperforms
dictionary-based regularization learning schemes such as lasso and
ridge regression in the sense that it can produces extremely high
sparseness of the final estimator. It also outperforms the classical
orthogonal greedy learning in the sense that it provides a more
user-friendly parametric selection strategy.

To stimulate more opinions from others on the ``greedy-metric''
issue of greedy learning, we present the following two remarks.

\begin{remark}\label{remark1}
 In this paper, we give a type of ``greedy-metric'' for OGL. In
 greedy approximation, Temlyakov \cite{Temlaykov2008} has been
 proposed various greedy-metric such as the super greedy algorithm
 and weak greedy algorithm. Since greedy learning focus on not only the
 approximation capability but also the capacity of the space spanned by the selected atoms,
  we guess that all these metrics can be
 adopted in greedy learning and may possess similar performances as
 the classical steepest gradient descent metric. We will also keep
working on this    issue and report our progress in a future
publication.
\end{remark}

\begin{remark}\label{remark1}
Programmers frequently ask us what is the essential advantage of
$\delta$-TOGL. This is a good question and we find a bit headache to
answer it. Admittedly, in this paper, we do not provide any
essential advantages of $\delta$-TOGL. The purpose of this paper is
only to propose the concepts of ``greedy metric'' and show that we
can use the greedy metric to reach the ``bias'' and ``variance''
trade-off. However, in our opinion,  there are at least two
advantages of $\delta$-TOGL. The first one is that, compared with
OGL, its generalization capability is not so sensitive to the
parameter. This advantage has already  been shown in Fig.4 and
Fig.5. The second one, $\delta$-TOGL can be viewed as an accelerated
version of OGL. As shown in Step 2 in Algorithm 1, we can select the
first atom satisfies the greedy metric. Under this circumstance, it
need not to compute   the $\langle r_{k-1},g\rangle_m$ for all
$g\in\mathcal D_n$. Once the size of dictionary is large, such an
operation can save a large number of computations. As the main
purpose of this paper is not to emphasize the computational speed,
we do not illustrate this advantage in the present paper. If it is
necessary, we will study this advantage within  practical
applications and report our progress in a future publication.
\end{remark}

\end{document}